\newtheorem{proposition}{Proposition}
\definecolor{cvprblue}{rgb}{0.21,0.49,0.74}
\title{PHASE-Net: Physics-Grounded Harmonic Attention System for Efficient Remote Photoplethysmography Measurement}
\author{
    Bo Zhao$^{1}$, 
    Dan Guo$^{2}$, 
    Junzhe Cao$^{1,5}$, 
    Yong Xu$^{5}$, 
    Bochao Zou$^{4}$, 
    Tao Tan$^{3}$, 
    Yue Sun$^{3}$, 
    Zitong Yu$^{1,6\ddagger}$ \\
    $^{1}$Great Bay University
    $^{2}$Hefei University of Technology,
    $^{3}$Macao Polytechnic University ,\\ 
    $^{4}$University of Science and Technology Beijing, 
    $^{5}$Harbin Institute of Technology, Shenzhen \\
    $^{6}$Dongguan Key Laboratory for Intelligence and Information Technology\\
    {\tt\small bozhao@link.cuhk.edu.cn, zitong.yu@ieee.org} \\
    $^\ddagger$ Corresponding author
}
\begin{document}
\maketitle
\begin{abstract} Remote photoplethysmography (rPPG) measurement enables non-contact physiological monitoring but suffers from accuracy degradation under head motion and illumination changes. Existing deep learning methods are mostly heuristic and lack theoretical grounding, limiting robustness and interpretability. In this work, we propose a physics-informed rPPG paradigm derived from the Navier–Stokes equations of hemodynamics, showing that the pulse signal follows a second-order dynamical system whose discrete solution naturally leads to a causal convolution, justifying the use of a Temporal Convolutional Network (TCN). Based on this principle, we design the PHASE-Net, a lightweight model with three key components: 1) Zero-FLOPs  Axial Swapper module to swap or transpose a few spatial channels to mix distant facial regions, boosting cross-region feature interaction without changing temporal order; 2) Adaptive Spatial Filter to learn a soft spatial mask per frame to highlight signal-rich areas and suppress noise for cleaner feature maps; and 3) Gated TCN, a causal dilated TCN with gating that models long-range temporal dynamics for accurate pulse recovery. Extensive experiments demonstrate that PHASE-Net achieves state-of-the-art performance and strong efficiency, offering a theoretically grounded and deployment-ready rPPG solution. The source code is  available at https://github.com/Alex036225/PhaseNet\end{abstract}
    
\section{Introduction}

Continuous monitoring of physiological signals, such as heart rate and heart rate variability, is fundamental to managing personal health and well-being. Traditional methods rely on contact-based sensors like ECG electrodes or pulse oximeters, which, despite their accuracy, are often inconvenient and uncomfortable for long-term, daily use. Remote photoplethysmography (rPPG)~\citep{verkruysse2008green,Poh_2010_ICA} has emerged as a revolutionary alternative, capable of reconstructing the pulse-wave signal from subtle, cardiac-induced variations in skin blood volume captured by a standard camera—all in a non-contact and imperceptible manner. This remarkable potential has positioned rPPG as a key enabling technology for a wide range of applications, including personal wellness tracking, driver monitoring, and affective computing~\citep{Chen2018Driver,McDuff2014, liu2024one}.

Despite its promise, the widespread adoption of rPPG in real-world scenarios faces significant hurdles. The core difficulty lies in the  nature of the physiological signal, which is easily overwhelmed by various noise sources~\citep{DeHaan_2013_CHROM,Wang2017,shao2025remote, huang2021unified, shi2024adaptively}. For instance, involuntary head movements, facial expressions, and fluctuations in ambient illumination can introduce artifacts that are orders of magnitude stronger than the authentic pulse signal. To address these challenges, deep learning-based methods~\citep{Yu_2019_PhysNet,Chen_2018_DeepPhys,yu2022physformer,huang2024etag} have become the dominant paradigm, demonstrating superior performance over traditional signal processing techniques by learning to regress the rPPG signal end-to-end from noisy video data.

However, we observe a fundamental limitation in the design philosophy of current deep learning models: they are, to a large extent, \textbf{heuristic}. Researchers typically frame rPPG as a generic spatio-temporal signal processing task, with network architectures often resulting from empirical trial-and-error. This \textbf{``black-box"} approach lacks a deep-seated understanding of the intrinsic physical laws governing the rPPG signal. This deficiency leads to two primary issues: 1) Models may overfit to dataset-specific noise patterns, resulting in poor generalization and a lack of robustness in unseen conditions, and 2) their poor interpretability makes it difficult to understand their decision-making process or guarantee their validity from a theoretical standpoint. This raises a critical question: Can we design an rPPG model whose architecture is a direct embodiment of the signal's physical principles, rather than merely a product of data fitting?

In this paper, to solve the above-mentioned issues, we introduce the \textbf{PHASE-Net (Physics-grounded Harmonic Attention System for Efficient rPPG measurement)}, a novel modeling framework rooted in the first principles of physics. Instead of treating the model as a black box, we begin with the Navier-Stokes equations for hemodynamics. Through a rigorous mathematical derivation, we reveal that the local pulse-wave dynamics can be physically described by a second-order damped harmonic oscillator model. Crucially, we further prove that the discrete-time solution to this physical model is formally equivalent to a causal convolution operator. This profound discovery provides an unequivocal theoretical justification for our use of a Temporal Convolutional Network (TCN) as the core dynamics modeling block, endowing our model with a powerful, physically-plausible inductive bias. The main contributions are summarized as follows:
\begin{itemize}
    \item We propose a new rPPG modeling paradigm grounded in the first principles of physics and mathematics, for the first time establishing a theoretical bridge between the underlying physiological dynamics and a specific network architecture (causal convolution).
    \item We design a novel zero-FLOP module, \textbf{ZAS} (Zero-FLOPs  Axial Swapper), which performs reversible spatial permutations on a small subset of channels to inject early cross-region interactions and strengthen long-range spatial dependencies without affecting the temporal axis.
    \item We introduce an \textbf{Adaptive Spatial Filtering (ASF)} module that not only generates a frame-wise spatial mask to highlight pulse-rich facial regions but also performs spatial aggregation and computes a first-order temporal derivative, concatenating it with the aggregated features to encode local pulse dynamics, thereby significantly enhancing model robustness under complex real-world conditions.
    \item Our final model, \textbf{PHASE-Net}, achieves superior performance on multiple public datasets within an extremely lightweight architecture, demonstrating that theoretical rigor and practical efficiency can be achieved in unison.
\end{itemize}

\section{Related Work}
\subsection{rPPG Measurement}
Early approaches typically extracted spatially averaged RGB traces from a facial region of interest (ROI) and applied Blind Source Separation (BSS) methods—such as ICA~\citep{Poh_2010_ICA} or PCA~\citep{Lewandowska_2011_PCA}—to separate the blood volume pulse (BVP) from noise.  
Building on skin–reflection priors, color–space designs such as CHROM~\citep{DeHaan_2013_CHROM}, POS~\citep{Wang_2016_POS}, and 2SR~\citep{wang2021two} introduced specific projections or subspace rotations to enhance robustness against motion and illumination changes.  These techniques established the foundation of rPPG research but rely on strong handcrafted assumptions and often break down under complex real-world motions or severe lighting variations.
With the advent of deep learning~\citep{zeng2024survey}, end-to-end networks have become dominant by directly learning spatio-temporal features from raw pixels and achieving large performance gains.  
2D/3D CNNs such as DeepPhys~\citep{Chen_2018_DeepPhys}, PhysNet~\citep{Yu_2019_PhysNet}, and EfficientPhys~\citep{liu2023efficientphys} capture both spatial patterns and short-term dynamics but are computationally expensive and parameter-heavy.  
To better model long-range temporal dependencies, researchers have moved from CNN–RNN hybrids to Transformers (PhysFormer~\citep{yu2022physformer}) and selective state-space models (PhysMamba~\citep{luo2024physmamba}, RhythmMamba~\citep{zou2025rhythmformer}) that enable linear-time sequence modeling with fine-grained temporal context.  
Most recently, PhysLLM~\citep{xie2025physllm} frames rPPG prediction as a language-like sequence modeling task, leveraging large language model backbones for stronger generalization.  
Despite their success, these architectures are largely borrowed from other domains and remain black-box, limiting interpretability and cross-domain robustness.

\subsection{Physics-Informed Approaches}
Physics-Informed Neural Networks (PINNs)~\citep{Raissi_2019_PINN} embed governing equations—typically partial differential equations—into the learning objective and have achieved remarkable success in fluid and solid mechanics by providing strong physical priors in data-scarce settings.  
In video-based physiological sensing, however, such principled integration of physics is still rare.  
Recent rPPG studies introduce periodic or contrastive physical losses~\citep{Choi2025_PeriodicMAE,sun2024contrastphysplus}, but the network architectures themselves remain unconstrained by the underlying hemodynamics.  
Our proposed PHASE-Net differs fundamentally: starting from a hemodynamic formulation, we derive a causal-convolution network whose computational structure is dictated by the physics itself, yielding a model that is both high-performing and intrinsically interpretable.

\begin{figure*}[th]
  \vspace{-1.2em}
  \centering
  
  \includegraphics[width=1\textwidth]{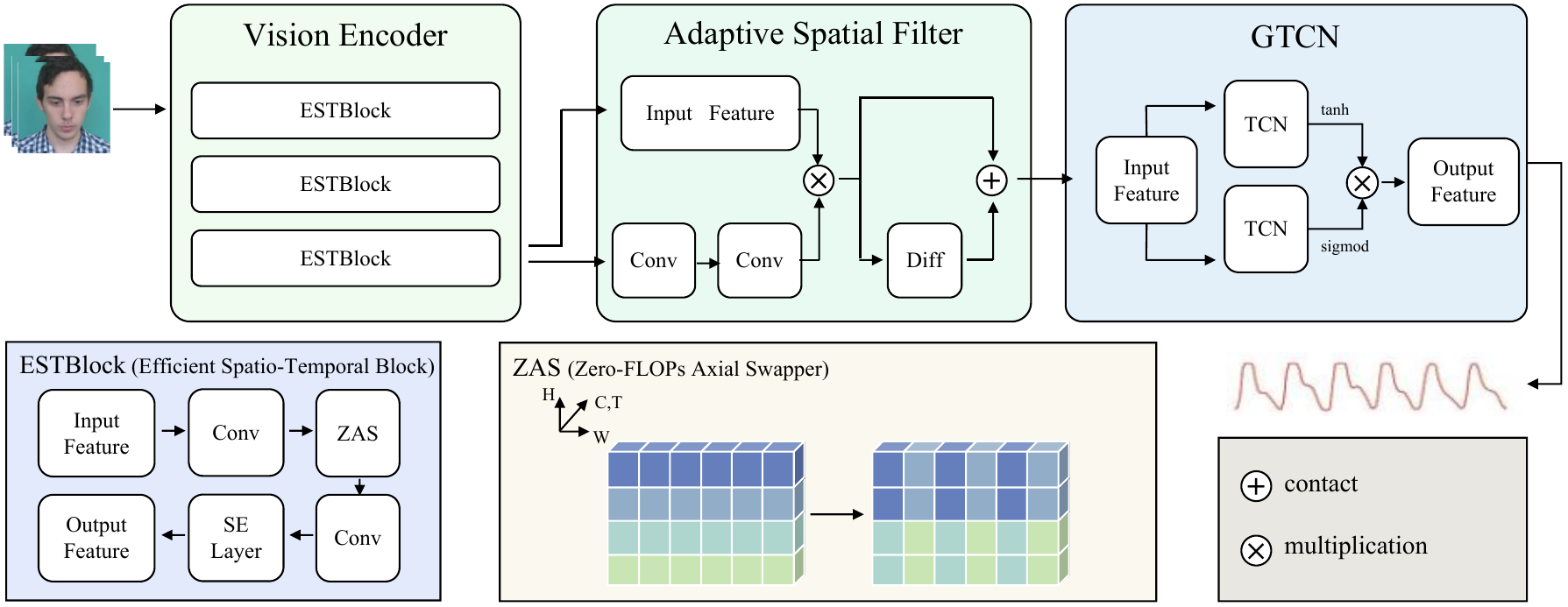}
  \vspace{-1.7em}
  \caption{An overview of the PHASE-Net. The Vision Encoder comprises three Efficient Spatio-Temporal Blocks extracting spatio–temporal features from video inputs. These are fed into an Adaptive Spatial Filter module that computes filtered features via convolution layers and differential operations. The temporally refined features are then processed by a GTCN block, which uses dual-path Temporal Convolutional Networks with tanh and sigmoid gates for fusion. Also shown are the inner contents of ESTBlock (Efficient Spatio-Temporal Block) including ZAS (Zero-FLOPs Axial Swapper) that swaps spatial/temporal axes without adding FLOP.}
  \label{fig:model}
  \vspace{-1.2em}
\end{figure*}

\section{Methodology}
\label{sec:methodology}

As illustrated in Fig.~\ref{fig:model}, PHASE-Net is a physics-grounded rPPG framework that couples a lightweight vision encoder with an Adaptive Spatial Filter (ASF) and a gated Temporal Convolutional Network (GTCN). The methodology in this section focuses on the high-level modeling principles and how they concretely lead to the architectural choices in Fig.~\ref{fig:model}. The full mathematical derivations and proofs are provided in Appendix~\ref{app:derivation} and Appendix~\ref{app:gen-proof-pathb}.

\subsection{Physics-Informed Temporal Modeling}
\label{sec:physics_informed_modeling}

Our central design principle is that the temporal backbone in Fig.~\ref{fig:model} should be a parameterized approximation of the underlying physical laws of hemodynamics, rather than a purely heuristic sequence model. Concretely, we first establish how the latent physiological state is encoded in the video stream, then derive the governing dynamics of this latent state, and finally show that these dynamics are computationally equivalent to a causal convolution, which is implemented by the GTCN block in Fig.~\ref{fig:model}. The complete derivation is presented in Appendix~\ref{app:derivation}; here we summarize the key steps.
\vspace{-0.4em}
\subsubsection{The Physical Observation Model: From Pixels to Latent State}

We begin by linking the camera’s visual signal to the latent physiological state of interest. Two physical principles are used: (1) the Beer--Lambert law, which states that changes in pixel intensity $\Delta I(t)$ are proportional to changes in subcutaneous blood volume $\Delta V(t)$, and (2) vessel compliance, where $\Delta V(t)$ is proportional to local blood pressure pulsation $\Delta p(t)$. Denoting this unobservable pulsation by $z(t)$, we obtain $z(t) \propto \Delta V(t) \propto \Delta I(t)$.Thus, the desired biological information $z(t)$ is linearly embedded in the pixel variations of the video stream $V$.

The vision encoder in Fig.~\ref{fig:model}, built from stacked Efficient Spatio-Temporal (EST) blocks, is responsible for extracting a noisy estimate $z_{\text{raw}}$ of this latent state:$z_{\text{raw}} = f_{\text{enc}}(V) \approx z(t)$.Since $z_{\text{raw}}$ inevitably contains motion and illumination artifacts, the downstream temporal model must leverage physical priors to refine it. The detailed construction of $f_{\text{enc}}$ and its EST blocks (including the use of ZAS) is described jointly with the ZAS module in Sec.~\ref{zero} and Appendix~\ref{Description of ZAS}.

\vspace{-0.4em}
\subsubsection{Governing Dynamics: From Fluid Dynamics to an ODE}
We now establish the dynamical equation that the `clean' latent signal $z(t)$ must obey. We start from the Navier-Stokes equations, the most accurate physical description of blood flow:
\begin{align}
\rho \left( \frac{\partial \mathbf{u}}{\partial t} + (\mathbf{u} \cdot \nabla) \mathbf{u} \right) &= -\nabla p + \mu \nabla^2 \mathbf{u},\\
\nabla \cdot \mathbf{u} &= 0.
\end{align}
Given the intractability of this non-linear PDE system for our task, we introduce a series of physically-justified simplifications. First, we linearize the equations by considering the small pulsation component around the steady-state flow. Second, to model the collective effect in a skin patch, we perform 1D-averaging along the pressure wave's principal axis. This yields a set of 1D linearized equations for momentum and continuity, where the viscous effects are modeled as a linear drag term $-ku'$ and vessel elasticity is incorporated via a compliance term $C$.

By combining these 1D equations and eliminating the velocity variable $u'$ (see Appendix~\ref{app:derivation} for detailed derivation), we arrive at a Damped Wave Equation that describes the propagation of the pressure pulse $p'$:
\begin{equation}
\frac{\partial^2 p'}{\partial t^2} + \alpha \frac{\partial p'}{\partial t} = c^2 \frac{\partial^2 p'}{\partial x^2},
\label{eq:damped_wave_pde_final}
\end{equation}
where $\alpha$ is a damping coefficient and $c$ is the wave speed. Crucially, the rPPG task involves a single-point observation at a fixed facial location ($x = x_0$). At this fixed point, the spatial derivative term $c^2 \frac{\partial^2 p'}{\partial x^2}$ represents the elastic restoring force from the surrounding tissue and fluid, which can be approximated as being proportional to the pressure deviation itself. This reduces the PDE to a classic second-order Ordinary Differential Equation (ODE), the Forced Damped Harmonic Oscillator model:
\begin{equation}
\frac{d^2 z(t)}{dt^2} + \alpha \frac{d z(t)}{dt} + \omega^2 z(t) = u(t) ,
\label{eq:damped_oscillator_ode_final}
\end{equation}
Here, $z(t) := p'(x_0, t)$ is our latent signal, $\omega^2$ is the effective restoring force coefficient, and $u(t)$ represents external driving forces such as motion-induced noise. This ODE provides a powerful physical prior for the dynamics of any true rPPG signal.

\subsubsection{Computational Equivalence: From an ODE to a TCN Architecture}
The final step is to translate this physical law into a neural network architecture. We discretize the continuous ODE (Eq.~\ref{eq:damped_oscillator_ode_final}) using a semi-implicit Euler method, which can be precisely represented as a Linear Time-Invariant State-Space Model :
\begin{equation}
\begin{aligned}
\mathbf{x}_t &= \mathbf{A} \mathbf{x}_{t-1} + \mathbf{B} a_t, \\
z_t &= \mathbf{C} \mathbf{x}_t,
\end{aligned}
\label{eq:ssm_final}
\end{equation}

where the state vector $\mathbf{x}_t = [z_t, v_t]^T$ contains the position and velocity of the oscillator, and the input $a_t$ is the discretized external force. The system matrices $(\mathbf{A}, \mathbf{B}, \mathbf{C})$ are determined entirely by the physical parameters $(\alpha, \omega)$ and the time step $\Delta t$.

Analyzing the solution to this state-space model leads to our core theoretical findings, which we formalize as two propositions.

\begin{proposition}[Equivalence to Causal Convolution]
The solution $z_t$ of the LTI system in Eq.~\ref{eq:ssm_final} can be expressed as a causal convolution of all past inputs:
$$z_t = \sum_{m=0}^{\infty} g[m] \cdot a_{t-m}, \quad \text{where} \quad g[m] = \mathbf{C}\mathbf{A}^m\mathbf{B}.$$
\end{proposition}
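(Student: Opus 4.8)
The plan is to solve the state recursion explicitly by unrolling it and then project onto the output to read off the convolution kernel. First I would iterate the state update $\mathbf{x}_t = \mathbf{A}\mathbf{x}_{t-1} + \mathbf{B} a_t$ a finite number of times. A straightforward induction on $k$ gives
\begin{equation}
\mathbf{x}_t = \mathbf{A}^{k+1}\mathbf{x}_{t-k-1} + \sum_{m=0}^{k} \mathbf{A}^{m}\mathbf{B}\,a_{t-m},
\label{eq:unroll}
\end{equation}
where the base case $k=0$ is the update itself, and the inductive step substitutes the update for $\mathbf{x}_{t-k-1}$ inside the leading term.

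The second step is to pass to the limit $k \to \infty$, which is where the only genuine analytic content lies. The residual term $\mathbf{A}^{k+1}\mathbf{x}_{t-k-1}$ must vanish, and this is exactly where the physics enters: because the oscillator is \emph{damped} ($\alpha > 0$), the discretization produces a matrix $\mathbf{A}$ whose eigenvalues lie strictly inside the unit disk, so its spectral radius satisfies $\rho(\mathbf{A}) < 1$. Consequently $\|\mathbf{A}^{k+1}\| \to 0$ geometrically, and under the standard assumption that the system was at rest in the distant past (or merely that the past states stay bounded), the residual $\mathbf{A}^{k+1}\mathbf{x}_{t-k-1} \to 0$. The same spectral bound guarantees absolute convergence of the series $\sum_{m=0}^{\infty}\mathbf{A}^{m}\mathbf{B}\,a_{t-m}$ for bounded inputs, so the limit of \eqref{eq:unroll} is well defined and yields $\mathbf{x}_t = \sum_{m=0}^{\infty}\mathbf{A}^{m}\mathbf{B}\,a_{t-m}$.

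Finally I would apply the output map. Left-multiplying by the constant matrix $\mathbf{C}$ and exchanging it with the absolutely convergent sum gives
\begin{equation}
z_t = \mathbf{C}\mathbf{x}_t = \sum_{m=0}^{\infty} \mathbf{C}\mathbf{A}^{m}\mathbf{B}\,a_{t-m} = \sum_{m=0}^{\infty} g[m]\,a_{t-m}, \qquad g[m] := \mathbf{C}\mathbf{A}^{m}\mathbf{B},
\label{eq:conv}
\end{equation}
which is precisely the claimed causal convolution, causality being manifest since only non-negative lags $t-m \le t$ appear; the kernel $g[m]$ is the standard impulse-response (Markov-parameter) sequence of an LTI system.

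I expect the only real obstacle to be the convergence argument in the second step: the algebraic unrolling is routine, but rigorously discarding the residual requires the stability condition $\rho(\mathbf{A}) < 1$, which one should either verify directly from the explicit form of $\mathbf{A}$ in terms of $(\alpha, \omega, \Delta t)$ or invoke as a consequence of positive damping. I would also state the rest-in-the-distant-past (zero initial condition) assumption explicitly, so that the infinite-horizon convolution in \eqref{eq:conv} is exact rather than merely approximate.
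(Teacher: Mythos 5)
Your proposal is correct and takes essentially the same route as the paper's proof in the appendix: unroll the state recursion, discard the initial-state term, and left-multiply by $\mathbf{C}$ to read off the kernel $g[m]=\mathbf{C}\mathbf{A}^m\mathbf{B}$, with $\rho(\mathbf{A})<1$ invoked for stability. The only cosmetic difference is that the paper fixes a zero initial condition $\mathbf{x}_0=\mathbf{0}$ and zero-pads the inputs ($a_k=0$ for $k<0$) to extend the finite sum to an infinite one, whereas you send the initial time to $-\infty$ and use the spectral bound to kill the residual $\mathbf{A}^{k+1}\mathbf{x}_{t-k-1}$ --- an equivalent, slightly more explicit justification of the same step.
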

\textit{Significance: This result rigorously transforms the physical model from a recursive form into a convolutional form, providing a theoretical basis for using a convolutional network to model the dynamics.}

\begin{proposition}[FIR Approximation]
The Infinite Impulse Response (IIR) convolution above can be approximated with arbitrary precision $\varepsilon$ by a Finite Impulse Response (FIR) filter of sufficient length, which is precisely the computation performed by a Temporal Convolutional Network (TCN).
\end{proposition}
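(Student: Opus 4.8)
The plan is to exploit the physical damping ($\alpha > 0$) to show that the impulse response $g[m] = \mathbf{C}\mathbf{A}^m\mathbf{B}$ of Proposition~1 decays geometrically, so that truncating the infinite sum at a finite length $M$ incurs an error that can be driven below any prescribed $\varepsilon$. First I would establish the key stability fact: because the continuous oscillator in Eq.~\ref{eq:damped_oscillator_ode_final} is strictly damped, its semi-implicit Euler discretization in Eq.~\ref{eq:ssm_final} produces a system matrix $\mathbf{A}$ whose spectral radius satisfies $\rho(\mathbf{A}) < 1$. Concretely, the continuous poles $\lambda_{\pm} = -\tfrac{\alpha}{2} \pm \sqrt{\tfrac{\alpha^2}{4} - \omega^2}$ have strictly negative real part whenever $\alpha > 0$, and the semi-implicit scheme (for a suitably small step $\Delta t$) maps these stable poles into the open unit disk.

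Next I would convert the spectral-radius bound into an explicit geometric bound on the matrix powers. Fixing any $r$ with $\rho(\mathbf{A}) < r < 1$, Gelfand's formula (equivalently, passing to the Jordan form of $\mathbf{A}$) guarantees a constant $K \geq 1$ such that $\|\mathbf{A}^m\| \leq K r^m$ for all $m \geq 0$. Submultiplicativity of the operator norm then yields
\begin{equation}
|g[m]| = |\mathbf{C}\mathbf{A}^m\mathbf{B}| \leq \|\mathbf{C}\|\,\|\mathbf{A}^m\|\,\|\mathbf{B}\| \leq K' r^m, \qquad K' := K\,\|\mathbf{C}\|\,\|\mathbf{B}\|.
\label{eq:geom_decay}
\end{equation}

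With the geometric decay in hand, the truncation estimate is routine. Defining the length-$M$ FIR output $\hat{z}_t := \sum_{m=0}^{M-1} g[m]\,a_{t-m}$ and assuming a bounded driving force $|a_t| \leq A_{\max}$, I would bound the discarded tail using \eqref{eq:geom_decay} and the geometric series:
\begin{equation}
|z_t - \hat{z}_t| = \Bigl| \sum_{m=M}^{\infty} g[m]\,a_{t-m} \Bigr| \leq A_{\max}\sum_{m=M}^{\infty} K' r^m = \frac{K' A_{\max}}{1-r}\,r^M.
\label{eq:tail_bound}
\end{equation}
Since $r < 1$, choosing any $M \geq \log\!\bigl(K' A_{\max}/[\varepsilon(1-r)]\bigr)/\log(1/r)$ forces the right-hand side below $\varepsilon$, which proves the claimed uniform approximation. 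The identification with a TCN is then immediate: $\hat{z}_t$ is exactly a finite weighted sum of the $M$ most recent inputs, i.e. a causal FIR filter, and the truncated kernel $\{g[0],\dots,g[M-1]\}$ is precisely the set of weights one layer of a (dilated) causal TCN realizes.

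I expect the main obstacle to be the first step—rigorously certifying $\rho(\mathbf{A}) < 1$ for this particular discretization—rather than the truncation, which is essentially bookkeeping once \eqref{eq:geom_decay} holds. The subtlety is twofold: one must verify that the semi-implicit map sends the stable continuous poles strictly inside the unit disk (this can fail for a naive explicit scheme or an overly large $\Delta t$, so a step-size condition must be recorded), and one must handle the critically damped case $\alpha^2 = 4\omega^2$, where $\mathbf{A}$ carries a repeated eigenvalue and is non-diagonalizable. In that degenerate case the bound $\|\mathbf{A}^m\| \leq K r^m$ still holds, but only because the polynomial prefactor arising from the Jordan block is absorbed by choosing $r$ strictly larger than $\rho(\mathbf{A})$; making that absorption explicit is the one place where genuine care is required.
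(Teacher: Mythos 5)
Your proposal is correct and follows essentially the same route as the paper's own proof: a geometric bound $\|\mathbf{A}^m\|\le K\rho^m$ turns the truncated tail into a geometric series, giving the error bound $\tfrac{K' A_{\max}}{1-r}\,r^{M}$ and the same logarithmic formula for the required filter length. The only difference is that where the paper simply asserts this bound as ``guaranteed for a stable system,'' you substantiate it (continuous poles with negative real part, a step-size condition for the semi-implicit map, Gelfand/Jordan handling of the critically damped case) --- a welcome strengthening, not a different argument.
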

\textit{Significance: This provides the final guarantee that a TCN is a principled architectural choice for implementing the physical dynamics of the rPPG signal with controllable error.}

These propositions form a complete logical chain from first principles to a specific network architecture. Therefore, the choice of a TCN in our PHASE-Net is not a heuristic one; it is the direct architectural embodiment of the physical laws governing the rPPG signal. Its role is to take the noisy feature estimate $z_{raw}$ and filter it such that the output conforms to this physically-mandated dynamical structure. Details can be seen in Appendix~\ref{app:derivation}.
For theoretical guarantees of cross-domain generalization,
please refer to Appendix~\ref{app:gen-proof-pathb}.

\subsection{Zero-FLOPs  Axial Swapper }
\label{zero}
The \textbf{Zero-FLOPs Axial Swapper (ZAS)} is a lightweight, parameter-free operator that introduces early cross-region interactions with \emph{zero} computational cost.  
It performs a reversible block-wise spatial transpose on a small subset of channels while strictly preserving the temporal dimension, providing richer spatial dependencies for subsequent physics-informed temporal modeling.
\vspace{-1em}
\paragraph{Mathematical Definition.}
Let the input feature map be
\begin{equation}
X \in \mathbb{R}^{B\times C\times T\times H\times W},
\end{equation}
where $B$ is the batch size, $C$ the channel dimension, $T$ the temporal length, and $H,W$ the spatial dimensions.  
ZAS acts only on the last $k=\lfloor pC\rfloor$ channels ($0<p<1$), leaving the remaining $C-k$ channels unchanged:
\begin{equation}
X=\big[X_{\mathrm{id}},\,X_{\mathrm{swap}}\big],
\qquad
X_{\mathrm{id}}\in\mathbb{R}^{B\times(C-k)\times T\times H\times W}.
\end{equation}
Given a block size $b$, each spatial slice of $X_{\mathrm{swap}}$ is partitioned into non-overlapping $b\times b$ blocks
\begin{equation}
\mathcal{P}:\ \mathbb{R}^{H\times W}\rightarrow
\mathbb{R}^{\frac{H}{b}\times \frac{W}{b}\times b\times b},
\end{equation}
and a two-dimensional transpose is applied inside every block
\begin{equation}
\mathcal{T}(Z)_{u,v}=Z_{v,u}, \quad Z\in \mathbb{R}^{b\times b}.
\end{equation}
The overall ZAS transformation is
\begin{align}
\mathrm{ZAS}(X_{\mathrm{swap}})
  &= \mathcal{P}^{-1}\big(\mathcal{T}(\mathcal{P}(X_{\mathrm{swap}}))\big), \\
\tilde{X}
  &= \big[X_{\mathrm{id}},\ \mathrm{ZAS}(X_{\mathrm{swap}})\big].
\end{align}

\begin{proposition}[Self-inversion]
    \[
    \mathrm{ZAS}(\mathrm{ZAS}(X_{\mathrm{swap}})) = X_{\mathrm{swap}} .
    \]
    This property guarantees that ZAS is a \emph{complete and reversible mapping}, which ensures feature consistency and stable gradient propagation even when ZAS is repeatedly applied in deep networks.
\end{proposition}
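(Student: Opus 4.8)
The plan is to exploit the factorized structure of ZAS, writing it as the composition $\mathrm{ZAS} = \mathcal{P}^{-1} \circ \mathcal{T} \circ \mathcal{P}$ and reducing the self-inversion claim to two elementary algebraic facts: that $\mathcal{P}$ is a bijection and that $\mathcal{T}$ is an involution. Since the partition $\mathcal{P}$, the block transpose $\mathcal{T}$, and its inverse all act identically and independently on each $(B,C,T)$-indexed spatial slice, it suffices to verify the identity on a single $H\times W$ matrix; the full tensor statement then follows slice-by-slice.

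First I would establish that $\mathcal{P}$ is invertible. Assuming the standard compatibility conditions $b \mid H$ and $b \mid W$, the map $\mathcal{P}:\mathbb{R}^{H\times W}\to\mathbb{R}^{\frac{H}{b}\times\frac{W}{b}\times b\times b}$ is a pure reindexing (a reshape), hence a bijection, so $\mathcal{P}\circ\mathcal{P}^{-1}=\mathrm{id}$ and $\mathcal{P}^{-1}\circ\mathcal{P}=\mathrm{id}$. Next I would verify that the block transpose is an involution: from the definition $\mathcal{T}(Z)_{u,v}=Z_{v,u}$, a direct index chase gives $(\mathcal{T}(\mathcal{T}(Z)))_{u,v}=(\mathcal{T}(Z))_{v,u}=Z_{u,v}$, so $\mathcal{T}\circ\mathcal{T}=\mathrm{id}$ on every $b\times b$ block, and since $\mathcal{T}$ preserves the block-partitioned shape it is an involution on the whole partitioned tensor.

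With these two facts in hand, the composition collapses cleanly. Expanding $\mathrm{ZAS}(\mathrm{ZAS}(X_{\mathrm{swap}}))=\mathcal{P}^{-1}\mathcal{T}\mathcal{P}\,\mathcal{P}^{-1}\mathcal{T}\mathcal{P}(X_{\mathrm{swap}})$, the inner $\mathcal{P}\,\mathcal{P}^{-1}$ cancels to the identity, leaving $\mathcal{P}^{-1}\mathcal{T}\mathcal{T}\mathcal{P}(X_{\mathrm{swap}})$; the involution property collapses $\mathcal{T}\mathcal{T}$ to the identity, and the remaining $\mathcal{P}^{-1}\mathcal{P}$ cancels, yielding $X_{\mathrm{swap}}$.

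The argument involves no hard estimates, so there is no serious analytical obstacle; the only point requiring care is bookkeeping. Specifically, I must confirm that $\mathcal{T}$ maps the partitioned space to itself (which holds because the blocks are square, $b\times b$, so transposition does not mix blocks or alter the outer $\frac{H}{b}\times\frac{W}{b}$ grid), and that the divisibility conditions $b\mid H,\ b\mid W$ are either assumed or handled by a padding convention so that $\mathcal{P}^{-1}$ is genuinely well-defined. Verifying that these structural constraints hold is the main thing to get right; once they are in place, the cancellation chain above completes the proof.
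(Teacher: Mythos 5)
Your proof is correct and follows essentially the same route the paper intends: ZAS is the conjugation $\mathcal{P}^{-1}\circ\mathcal{T}\circ\mathcal{P}$ of the involution $\mathcal{T}$ by the bijective reindexing $\mathcal{P}$, which is exactly the permutation-structure argument the paper relies on (it states the proposition without a detailed proof, justifying it via $\mathcal{P}$ and $\mathcal{T}$ being pure permutations). One small bookkeeping difference: where you propose a padding convention for the case $b\nmid H$ or $b\nmid W$, the paper's Appendix instead crops to the core region $H_2=\lfloor H/b\rfloor\, b$, $W_2=\lfloor W/b\rfloor\, b$ and leaves the border untouched, which equally preserves well-definedness and self-inversion.
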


\begin{proposition}[Energy preservation and 1-Lipschitz]
    Because both $\mathcal{P}$ and $\mathcal{T}$ are pure permutations,
    \[
    \|\mathrm{ZAS}(X_{\mathrm{swap}})\|_2=\|X_{\mathrm{swap}}\|_2,
    \qquad
    \mathrm{Lip}(\mathrm{ZAS})=1 .
    \]
    The output norm exactly matches the input norm,
    preventing signal amplification and improving training stability.
\end{proposition}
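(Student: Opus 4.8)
The plan is to show that $\mathrm{ZAS}$, restricted to $X_{\mathrm{swap}}$, is nothing but a relabelling of the tensor's entries—that is, a permutation—and then read off both claims from the elementary fact that permutations preserve the sum of squared entries. First I would observe that the block-partition map $\mathcal{P}$ is a pure reshape: it is a bijection between the index set of $\mathbb{R}^{H\times W}$ and that of $\mathbb{R}^{(H/b)\times(W/b)\times b\times b}$ that moves each scalar entry to a new coordinate without altering its value (this requires $b \mid H$ and $b \mid W$, which I would state as a standing alignment assumption). Likewise, the intra-block transpose $\mathcal{T}(Z)_{u,v}=Z_{v,u}$ permutes the $b^2$ entries of every block among themselves. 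Since a composition of index-permutations is again an index-permutation, the map $\mathrm{ZAS}=\mathcal{P}^{-1}\circ\mathcal{T}\circ\mathcal{P}$, applied independently across the batch, channel, and temporal axes, reindexes the full entry-set of $X_{\mathrm{swap}}$ bijectively. Formally, flattening $X_{\mathrm{swap}}$ to a vector $\mathrm{vec}(X_{\mathrm{swap}})$, there is a permutation matrix $P$ with $\mathrm{vec}(\mathrm{ZAS}(X_{\mathrm{swap}}))=P\,\mathrm{vec}(X_{\mathrm{swap}})$; in particular $\mathrm{ZAS}$ is a linear operator.

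Energy preservation then follows immediately: the squared $\ell_2$ norm is the sum of squares of all entries, and permuting the summands leaves the sum unchanged, so $\|\mathrm{ZAS}(X_{\mathrm{swap}})\|_2^2=\sum_i (P\,\mathrm{vec}(X_{\mathrm{swap}}))_i^2=\sum_i (\mathrm{vec}(X_{\mathrm{swap}}))_i^2=\|X_{\mathrm{swap}}\|_2^2$. Equivalently, $P^\top P=I$ exhibits $P$ as orthogonal and hence isometric, which is the claimed identity $\|\mathrm{ZAS}(X_{\mathrm{swap}})\|_2=\|X_{\mathrm{swap}}\|_2$.

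For the Lipschitz constant I would combine linearity with the isometry just established. Given any two inputs $X,Y$ of the swap block, linearity gives $\mathrm{ZAS}(X)-\mathrm{ZAS}(Y)=\mathrm{ZAS}(X-Y)$, and energy preservation applied to $X-Y$ yields $\|\mathrm{ZAS}(X)-\mathrm{ZAS}(Y)\|_2=\|X-Y\|_2$. Thus the defining Lipschitz inequality holds with equality for every pair, which simultaneously certifies $\mathrm{Lip}(\mathrm{ZAS})\le 1$ and $\mathrm{Lip}(\mathrm{ZAS})\ge 1$, giving $\mathrm{Lip}(\mathrm{ZAS})=1$ exactly.

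The only genuine care needed—and the sole place a naive argument could slip—is the bookkeeping certifying that $\mathrm{ZAS}$ is a single global permutation rather than merely permutation-like on each block in isolation. I would make this airtight by writing the explicit coordinate map: an entry at spatial position $(i,j)$ with $i=b\,i_0+u$ and $j=b\,j_0+v$ (where $0\le u,v<b$) is sent by $\mathrm{ZAS}$ to position $(b\,i_0+v,\ b\,j_0+u)$, an involution on $\{0,\dots,H-1\}\times\{0,\dots,W-1\}$; verifying that this is a well-defined bijection is routine once the block-alignment divisibility is assumed. No deeper obstacle arises, because the entire content of the proposition is the structural observation that $\mathcal{P}$ and $\mathcal{T}$ never create, destroy, or rescale entries, so linearity and isometry are automatic once that observation is formalized.
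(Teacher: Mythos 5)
Your proof is correct and takes essentially the same route as the paper, whose entire argument is the observation that $\mathcal{P}$ and $\mathcal{T}$ are pure permutations (the appendix adds only that the Jacobian of ZAS is a permutation matrix); you simply make this rigorous via the vectorized permutation-matrix representation, orthogonality, and linearity to pin down $\mathrm{Lip}(\mathrm{ZAS})=1$ exactly. One harmless nuance: you impose $b\mid H$ and $b\mid W$ as a standing assumption, whereas the paper's Algorithm crops to the aligned core $H_2=\lfloor H/b\rfloor b$, $W_2=\lfloor W/b\rfloor b$ and leaves border entries fixed—still a global permutation, so the conclusion is unchanged.
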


The detailed description of the ZAS module is provided in the Appendix~\ref{Description of ZAS}.

\subsection{Adaptive Spatial Filter }
\label{sec:attention_final}

The feature representations learned from video for rPPG are inherently subject to the challenge of spatial heterogeneity.  
The target physiological signal exhibits a high signal-to-noise ratio (SNR) only in specific facial regions (e.g., the forehead and cheeks), while other areas are dominated by irrelevant nuisance variations, such as non-rigid deformations from facial expressions and specular reflections under changing illumination.  
In this context, a naive aggregation operator like Global Average Pooling (GAP), which imposes a uniform prior over all spatial locations, is suboptimal and inevitably produces corrupted temporal features where signal-bearing patterns are contaminated by these nuisance variations.

To address this challenge, we introduce a learnable, dynamic spatial filtering mechanism called the \textbf{Adaptive Spatial Filter (ASF)}, which adaptively aggregates information from the high-dimensional feature map and further enriches the representation by explicitly encoding temporal dynamics.  
Given spatio-temporal features $Z \in \mathbb{R}^{B \times C' \times T \times H \times W}$ from the visual encoder, ASF first estimates an unnormalized spatial logit map $M'_t \in \mathbb{R}^{B \times 1 \times H \times W}$ for each frame $t$ via a lightweight convolutional network $f_{conv}$:
\begin{equation}
    M'_t = f_{conv}(Z_{:,:,t}).
\end{equation}
The logits are converted into a normalized attention mask $M_t$ through a spatial Softmax:
\begin{equation}
    \mathrm{vec}(M_t) = \mathrm{softmax}\big(\mathrm{vec}(M'_t)\big),
\end{equation}
where $\mathrm{vec}(\cdot)$ flattens the spatial dimensions $(H,W)$.  
This mask assigns higher weights to signal-rich regions and lower weights to noisy ones.  
The weighted feature for each frame is then obtained by
\begin{equation}
    \hat{Z}_t = Z_{:,:,t} \odot M_t ,
\end{equation}
where $\odot$ denotes element-wise multiplication with broadcasting.  
Aggregating over the spatial dimensions yields a robust 1D feature vector
\begin{equation}
    \mathbf{z}_t = \sum_{h,w} \hat{Z}_{t,:,h,w}.
\end{equation}

To explicitly capture the local temporal dynamics of the rPPG signal, ASF further computes the \textbf{first-order temporal derivative} of the aggregated sequence:
\begin{equation}
    \mathbf{v}_t = \mathbf{z}_t - \mathbf{z}_{t-1}, \qquad t=2,\dots,T,
\end{equation}
where $\mathbf{v}_t$ represents the instantaneous “velocity’’ of the latent pulse representation.  
The final ASF output is formed by channel-wise concatenation of the static aggregated feature and its temporal derivative, $\mathbf{z}'_t = [\,\mathbf{z}_t,\; \mathbf{v}_t\,]$,which preserves both the spatially purified intensity and the short-term temporal variation of the underlying blood volume pulse.

From a \textbf{representation learning} perspective, ASF acts as a \textbf{disentangling} mechanism.  
It collapses the noisy spatial dimensions while simultaneously encoding instantaneous temporal changes, yielding a low-dimensional but high-fidelity sequence that serves as an ideal input for the downstream physics-informed temporal model.  
By providing both clean spatial aggregation and explicit motion-aware dynamics, ASF enables the physical model to focus on fitting the intrinsic hemodynamic patterns rather than combating confounding visual noise, thereby improving accuracy and generalization.Qualitative visualizations of the resulting predicted waveforms and their power spectral densities on UBFC-rPPG and PURE are provided in Appendix~\ref{sec:visual}.

\subsection{Training Objective}
\label{sec:final_model}
The primary training objective $\mathcal{L}_{\text{pred}}$ for the proposed PHASE-Net is to maximize the morphological similarity between the predicted rPPG waveform $\hat{\mathbf{y}} \in \mathbb{R}^T$ and the ground truth signal $\mathbf{y} \in \mathbb{R}^T$. We employ a Negative Pearson Correlation loss, which directly optimizes this objective and is a strong standard for physiological signal regression:
\begin{equation}
    \mathcal{L}_{\text{pred}} = - \frac{\sum_{t=1}^{T} (\hat{y}_t - \bar{\hat{\mathbf{y}}})(y_t - \bar{\mathbf{y}})}{\sqrt{\sum_{t=1}^{T} (\hat{y}_t - \bar{\hat{\mathbf{y}}})^2 \sum_{t=1}^{T} (y_t - \bar{\mathbf{y}})^2}},
\end{equation}
where $\bar{\hat{\mathbf{y}}}$ and $\bar{\mathbf{y}}$ denote the mean values of the predicted and ground truth signals, respectively.

\begin{table*}[t]
\small
\centering
\vspace{-1.3em}
\caption{Intra-dataset evaluation on UBFC-rPPG, PURE, BUAA and MMPD datasets. Best results are in \textbf{bold}.}
\vspace{-0.7em}
\label{tabel:Inner}
\renewcommand\arraystretch{1}
\setlength{\tabcolsep}{1mm} 
\resizebox{0.92\textwidth}{!}{
\begin{tabular}{@{}c ccc ccc ccc ccc@{}} 
\toprule
\textbf{Method}
& \multicolumn{3}{c}{\textbf{UBFC-rPPG}} & \multicolumn{3}{c}{\textbf{PURE}} & \multicolumn{3}{c}{\textbf{BUAA}} & \multicolumn{3}{c}{\textbf{MMPD}}\\
\cmidrule(lr){2-4}\cmidrule(lr){5-7}\cmidrule(lr){8-10}\cmidrule(lr){11-13}
& MAE$\downarrow$ & RMSE$\downarrow$ & R$\uparrow$
& MAE$\downarrow$ & RMSE$\downarrow$ & R$\uparrow$
& MAE$\downarrow$ & RMSE$\downarrow$ & R$\uparrow$
& MAE$\downarrow$ & RMSE$\downarrow$ & R$\uparrow$\\
\cmidrule(r){1-13}

Green~\citep{verkruysse2008green}
& 19.73 & 31.00 & 0.37 & 10.09 & 23.85 & 0.34 & 6.89 & 10.39 & 0.60 & 21.68 & 27.69 & -0.01 \\
ICA~\citep{Poh_2010_ICA}
& 16.00 & 25.65 & 0.44 & 4.77 & 16.07 & 0.72 & - & - & - & 18.60 & 24.30 & 0.01 \\
CHROM~\citep{DeHaan_2013_CHROM}
& 4.06 & 8.83 & 0.89 & 5.77 & 14.93 & 0.81 & - & - & - & 13.66 & 18.76 & 0.08 \\

POS~\citep{Wang_2016_POS}
& 4.08 & 7.72 & 0.92 & 3.67 & 11.82 & 0.88 & - & - & - & 12.36 & 17.71 & 0.18 \\

PhysNet~\citep{Yu_2019_PhysNet}
& 2.95 & 3.67 & 0.97 & 2.10 & 2.60 & 0.99 & 10.89 & 11.70 & -0.04 & 4.80 & 11.80 & 0.60 \\
Meta-rPPG~\citep{lee2020metarppg}
& 5.97 & 7.42 & 0.57 & 2.52 & 4.63 & 0.98 & - & - & - & - & - & - \\
PhysFormer~\citep{yu2022physformer}
& 0.92 & 2.46 & 0.99 & 1.10 & 1.75 & 0.99 &  8.45 & 10.17 & -0.06 & 11.99 & 18.41 & 0.18 \\
EfficientPhys~\citep{liu2023efficientphys}
& 1.41 & 1.81 & 0.99 & 4.75 & 9.39 & 0.99 & 16.09 & 16.80 & 0.14 & 13.47 & 21.32 & 0.21 \\
Contrast-Phys+~\citep{sun2024contrastphysplus}
& 0.21 & 0.80 & 0.99 & 0.48 & 0.98 & 0.99 & - & - & - & - & - & - \\
DiffPhys~\citep{chen2024diffphys}
& 1.05 & 1.63 & 0.99 & 1.46 & 5.88 & 0.90 & - & - & - & - & - & - \\
RhythmFormer~\citep{zou2025rhythmformer}
& 0.50 & 0.78 & 0.99 & 0.27 & 0.47 & 0.99 & 9.19 & 11.93 & -0.10 & \textbf{4.69} & 11.31 & 0.60 \\

STFPNet~\citep{li2025stfpnet}
& 0.41 & 0.95 & 0.99 & 0.47 & 0.67 & 0.99 & - & - & - & - & - & - \\
Style-rPPG~\citep{liu2025style}
& 0.17 & 0.41 & 0.99 & 0.39 & 0.62 & 0.99 & - & - & - & - & - & - \\
LST-rPPG~\citep{li2025lst}
& 0.16 & 0.57 & 0.99 & 0.32 & 0.62 & 0.99 & - & - & - & - & - & - \\
PhysDiff~\citep{qian2025physdiff}
& 0.33 & 0.57 & 0.99 & 0.29 & 0.54 & 0.99 & - & - & - & 7.17 & 9.63 & 0.78 \\

\cmidrule(r){1-13}
\rowcolor{blue!10}
\textbf{PHASE-Net (Ours)}
& \textbf{0.15} & \textbf{0.53} & \textbf{0.99} & \textbf{0.14} & \textbf{0.35} & \textbf{0.99} & \textbf{5.89} & \textbf{7.89} & \textbf{0.48} & 4.78 & \textbf{8.22} & \textbf{0.71} \\
\bottomrule
\end{tabular}}
\vspace{-1.0em}
\end{table*}

\section{Experiments}
\label{sec:experiments}

We evaluate on UBFC-rPPG~\cite{Bobbia2017UBFCrPPG}, PURE~\cite{Stricker2014PURE}, BUAA-MIHR~\cite{Xi2020BUAAMIHR}, and MMPD~\cite{Tang2023MMPD} under standard intra-dataset and cross-dataset protocols. Dataset descriptions and implementation details are in Appendix~\ref{sec:datasets} and~\ref{sec:Details}.

\subsection{Intra-Dataset Evaluation}

We first evaluate PHASE-Net on the standard intra-dataset benchmark, where the model is trained and tested on splits from the same dataset to measure predictive power under consistent conditions. The detailed results are presented in Table~\ref{tabel:Inner}. Across all four benchmarks, PHASE-Net delivers the lowest or near-lowest errors and the highest correlations. On UBFC-rPPG~\cite{Bobbia2017UBFCrPPG}, our method achieves an MAE of 0.15 bpm and RMSE of 0.53 bpm with $R=0.99$, surpassing the previous best MAE of 0.16 bpm by LST-rPPG and demonstrating excellent waveform fidelity. On  PURE~\cite{Stricker2014PURE}, PHASE-Net attains a remarkable 0.14 bpm MAE and 0.35 bpm RMSE while maintaining $R=0.99$, cutting the MAE by roughly half compared with strong recent baselines such as RhythmFormer (0.27 bpm) or PhysDiff (0.29 bpm). Even on the more challenging BUAA-MIHR~\cite{Xi2020BUAAMIHR} dataset, which features significant illumination changes and device diversity, our model achieves 5.89 bpm MAE and 7.89 bpm RMSE with a positive correlation of 0.48; competing deep models such as PhysFormer suffer negative correlations and considerably higher errors. On MMPD~\cite{Tang2023MMPD}, which introduces diverse sensors and colored lighting, PHASE-Net reaches 4.78 bpm MAE and 8.22 bpm RMSE with $R=0.71$, again outperforming all baselines and preserving temporal structure despite domain complexity. These results highlight that PHASE-Net delivers low errors across both controlled (UBFC-rPPG~\cite{Bobbia2017UBFCrPPG},  PURE~\cite{Stricker2014PURE}) and complex (BUAA-MIHR~\cite{Xi2020BUAAMIHR}, MMPD~\cite{Tang2023MMPD}) settings, with high correlations ensuring faithful waveform recovery for downstream analysis. Its physics-driven causal convolution, adaptive spatial filter, and parameter-free ZAS module together enable these gains with only 0.29 M parameters, achieving strong accuracy, robustness, and efficiency.

Additional qualitative examples of predicted versus ground-truth rPPG signals are provided in Appendix~\ref{sec:visual}, where the waveform and PSD plots further illustrate the fidelity of PHASE-Net’s predictions .

\begin{table*}[t]
\centering
\vspace{-0.5em}
\caption{Multi-domain generalization evaluation (Leave-One-Out Protocol). U=UBFC-rPPG, P=PURE, B=BUAA-MIHR, M=MMPD. Best results are marked in \textbf{bold}.}
\vspace{-0.7em}
\label{tab:cross_dataset}
\renewcommand\arraystretch{1.}
\setlength{\tabcolsep}{1mm} 
\resizebox{0.92\textwidth}{!}{
\begin{tabular}{@{}l ccc ccc ccc ccc@{}} 
\toprule
\textbf{Method} & \multicolumn{3}{c}{\textbf{Others$\rightarrow$U}} & \multicolumn{3}{c}{\textbf{Others$\rightarrow$P}} & \multicolumn{3}{c}{\textbf{Others$\rightarrow$B}} & \multicolumn{3}{c}{\textbf{Others$\rightarrow$M}} \\
\cmidrule(lr){2-4} \cmidrule(lr){5-7} \cmidrule(lr){8-10} \cmidrule(lr){11-13}
& MAE$\downarrow$ & RMSE$\downarrow$ & R$\uparrow$
& MAE$\downarrow$ & RMSE$\downarrow$ & R$\uparrow$
& MAE$\downarrow$ & RMSE$\downarrow$ & R$\uparrow$
& MAE$\downarrow$ & RMSE$\downarrow$ & R$\uparrow$ \\
\midrule
Green~\cite{verkruysse2008green}  & 19.73 & 31.00 & 0.37 & 10.09 & 23.85 & 0.34 & 6.89 & 10.39 & 0.60 & 21.68 & 27.69 & -0.01 \\
CHROM~\cite{DeHaan_2013_CHROM}    &  7.23 &  8.92 & 0.51 &  9.79 & 12.76 & 0.37 & 6.09 &  8.29 & 0.51 & 13.66 & 18.76 &  0.08 \\
POS~\cite{Wang_2016_POS}            &  7.35 &  8.04 & 0.49 &  9.82 & 13.44 & 0.34 & 5.04 &  7.12 & 0.63 & 12.36 & 17.71 &  0.18 \\
\midrule
EfficientPhys~\cite{liu2023efficientphys} & 12.87 & 18.80 & 0.19 &  7.15 & 15.04 & 0.23 & 32.30 & 34.00 & -0.03 & 12.87 & 18.80 & 0.19 \\
PhysFormer~\cite{yu2022physformer}       & 10.29 & 18.13 & 0.60 & 19.75 & 24.30 & 0.24 & 22.09 & 26.21 &  0.03 & 13.90 & 19.30 & 0.06 \\
PhysNet~\cite{Yu_2019_PhysNet}            & 13.83 & 23.66 & 0.35 & 33.23 & 35.25 & -0.15 & 12.75 & 16.37 &  0.08 & 13.37 & 16.64 & 0.29 \\
RhythmFormer~\cite{zou2025rhythmformer}      & 14.71 & 22.49 & 0.43 & 21.11 & 25.76 &  0.04 &  6.04 & 10.84 &  0.42 & 16.14 & 20.50 & -0.11 \\
\midrule
\rowcolor{blue!10}
\textbf{PHASE-Net (Ours)} & \textbf{10.04} & \textbf{15.56} & \textbf{0.65}
& \textbf{2.86} & \textbf{9.66}  & \textbf{0.91}
& \textbf{2.56} & \textbf{3.25}  & \textbf{0.96}
& \textbf{10.33} & \textbf{16.20} & \textbf{0.40} \\
\bottomrule
\end{tabular}%
}
\vspace{-1.0em}
\end{table*}

\subsection{Generalization Ability Evaluation}
\noindent\textbf{Multi-Domain Generalization.} \quad 
We evaluate PHASE-Net using the leave-one-out protocol, training on three datasets and testing on the remaining one to simulate deployment in unseen environments and rigorously assess domain invariance. As shown in Table~\ref{tab:cross_dataset}, PHASE-Net achieves the best overall performance on all four transfer directions, often by a large margin. When transferring to  PURE~\cite{Stricker2014PURE}, it records 2.86 bpm MAE and 9.66 bpm RMSE with $R=0.91$, outperforming the next best deep model RhythmFormer (21.11/25.76) by over an order of magnitude. For BUAA-MIHR~\cite{Xi2020BUAAMIHR} with severe illumination variation, it attains 2.56 bpm MAE and 3.25 bpm RMSE with $R=0.96$, whereas PhysFormer shows errors above 22 bpm and near-zero correlation. Even in the more moderate UBFC-rPPG~\cite{Bobbia2017UBFCrPPG} and MMPD~\cite{Tang2023MMPD} transfers, PHASE-Net remains superior: 10.04/15.56 bpm MAE/RMSE ($R=0.65$) on UBFC-rPPG~\cite{Bobbia2017UBFCrPPG} and 10.33/16.20 bpm ($R=0.40$) on MMPD~\cite{Tang2023MMPD}, outperforming both classical signal-processing baselines and recent deep networks.

These results confirm that PHASE-Net learns physics-aligned representations rather than dataset-specific appearance cues, providing stable predictive power and strong cross-domain robustness even when the target domain differs greatly from the training distributions. The combination of a causal convolution derived from hemodynamic principles, an adaptive spatial filter that focuses on signal-rich regions, and the parameter-free ZAS module collectively reinforces temporal consistency and prevents overfitting to superficial domain artifacts.

\vspace{0.3em}
\noindent\textbf{Limited-Source Domain Generalization.} \quad 
We further evaluate a limited-source setting where the model is trained on only two datasets and tested on a third unseen target domain, simulating deployment with scarce and heterogeneous training data. Table~\ref{tab:mmpd_limited} and Table~\ref{tab:buaa_limited} shows that PHASE-Net consistently achieves the best or near-best results across all source–target pairs.  
When trained on  PURE~\cite{Stricker2014PURE}+UBFC-rPPG~\cite{Bobbia2017UBFCrPPG} and tested on the challenging MMPD~\cite{Tang2023MMPD}, our model reaches an MAE of \textbf{9.76} bpm and RMSE of 16.07 bpm ($R=0.39$), outperforming RhythmFormer and other deep baselines. Training on  PURE~\cite{Stricker2014PURE}+BUAA-MIHR~\cite{Xi2020BUAAMIHR} yields similar gains, with MAE/RMSE of 11.38/15.96 bpm, while generalization to the illumination-sensitive BUAA-MIHR~\cite{Xi2020BUAAMIHR} dataset is especially strong: using  PURE~\cite{Stricker2014PURE}+UBFC-rPPG~\cite{Bobbia2017UBFCrPPG} as sources, PHASE-Net lowers the MAE to 2.91 bpm and RMSE to 4.23 bpm with a correlation of 0.92, well ahead of all competitors.  
These results confirm that by leveraging physics-grounded modeling, PHASE-Net captures domain-invariant physiological dynamics rather than overfitting to superficial dataset biases.

\begin{table*}[t]
\vspace{-1.2em}
    \centering
    \caption{Limited-source domain generalization results on MMPD.U=UBFC-rPPG, P=PURE, B=BUAA-MIHR, M=MMPD. Best results are marked in \textbf{bold}.}
    \vspace{-0.7em}
    \label{tab:mmpd_limited}
    \renewcommand\arraystretch{1}
    \setlength{\tabcolsep}{3pt}
    \resizebox{0.91\textwidth}{!}{
    \begin{tabular}{lcccccccccccc}
        \toprule
        \multirow{2}{*}{Model} &
        \multicolumn{3}{c}{P+B} &
        \multicolumn{3}{c}{P+U} &
        \multicolumn{3}{c}{B+U} &
        \multicolumn{3}{c}{Average} \\
        \cmidrule(lr){2-4}\cmidrule(lr){5-7}\cmidrule(lr){8-10}\cmidrule(lr){11-13}
        & MAE$\downarrow$ & RMSE$\downarrow$ & R$\uparrow$
        & MAE$\downarrow$ & RMSE$\downarrow$ & R$\uparrow$
        & MAE$\downarrow$ & RMSE$\downarrow$ & R$\uparrow$
        & MAE$\downarrow$ & RMSE$\downarrow$ & R$\uparrow$ \\
        \midrule
        Green~\cite{verkruysse2008green}
            & 21.68 & 27.69 & -0.01
            & 21.68 & 27.69 & -0.01
            & 21.68 & 27.69 & -0.01
            & 21.68 & 27.69 & -0.01 \\
        PhysNet~\cite{Yu_2019_PhysNet}
            & 13.20 & 16.70 & 0.23
            & 11.00 & 17.30 & 0.28
            & 13.50 & 17.00 & 0.09
            & 12.57 & 17.00 & 0.20 \\
        PhysFormer~\cite{yu2022physformer}
            & 13.90 & 18.60 & 0.21
            & 11.40 & 17.50 & 0.23
            & 13.20 & 16.50 & 0.12
            & 12.83 & 17.53 & 0.19 \\
        EfficientPhys~\cite{liu2023efficientphys}
            & 11.90 & 18.50 & 0.21
            & 11.80 & 18.90 & 0.22
            & 15.50 & 20.80 & 0.03
            & 13.07 & 19.40 & 0.15 \\
        RhythmFormer~\cite{zou2025rhythmformer}
            & 13.98 & 19.46 & 0.12
            & 10.50 & 16.72 & 0.28
            & 12.57 & 17.45 & 0.15
            & 12.35 & 17.88 & 0.18 \\
        \rowcolor{blue!10}
        \textbf{PHASE-Net (Ours)}
            & \textbf{11.38} & \textbf{15.96} & \textbf{0.30}
            & \textbf{9.76}  & \textbf{16.07} & \textbf{0.39}
            & \textbf{11.84} & \textbf{17.47} & \textbf{0.15}
            & \textbf{10.99} & \textbf{16.50} & \textbf{0.28} \\
        \bottomrule
    \end{tabular}
    }
    \vspace{-0.6em}
\end{table*}

\begin{table*}[t]
    \centering
    \caption{Limited-source domain generalization results on BUAA-MIHR.U=UBFC-rPPG, P=PURE, B=BUAA-MIHR, M=MMPD. Best results are marked in \textbf{bold}.}
    \vspace{-0.7em}
    \label{tab:buaa_limited}
    \renewcommand\arraystretch{1}
    \setlength{\tabcolsep}{3pt}
     \resizebox{0.91\textwidth}{!}{
    \begin{tabular}{lcccccccccccc}
        \toprule
        \multirow{2}{*}{Model} &
        \multicolumn{3}{c}{P+M} &
        \multicolumn{3}{c}{M+U} &
        \multicolumn{3}{c}{P+U} &
        \multicolumn{3}{c}{Average} \\
        \cmidrule(lr){2-4}\cmidrule(lr){5-7}\cmidrule(lr){8-10}\cmidrule(lr){11-13}
        & MAE$\downarrow$ & RMSE$\downarrow$ & R$\uparrow$
        & MAE$\downarrow$ & RMSE$\downarrow$ & R$\uparrow$
        & MAE$\downarrow$ & RMSE$\downarrow$ & R$\uparrow$
        & MAE$\downarrow$ & RMSE$\downarrow$ & R$\uparrow$ \\
        \midrule
        Green~\cite{verkruysse2008green}
            & 6.89 & 10.39 & 0.60
            & 6.89 & 10.39 & 0.60
            & 6.89 & 10.39 & 0.60
            & 6.89 & 10.39 & 0.60 \\
        PhysNet~\cite{Yu_2019_PhysNet}
            & 20.97 & 24.75 & 0.01
            & 11.40 & 16.72 & 0.14
            & 15.34 & 21.48 & -0.29
            & 15.90 & 20.98 & -0.05 \\
        PhysFormer~\cite{yu2022physformer}
            & 14.86 & 18.26 & 0.03
            & 10.87 & 16.20 & 0.08
            & 18.23 & 22.17 & 0.07
            & 14.65 & 18.88 & 0.06 \\
        EfficientPhys~\cite{liu2023efficientphys}
            & 4.15 & 7.14 & 0.77
            & 3.00 & 5.18 & 0.89
            & 4.60 & 8.06 & 0.72
            & 3.92 & 6.79 & 0.79 \\
        RhythmFormer~\cite{zou2025rhythmformer}
            & 4.32 & 6.70 & 0.82
            & 6.20 & 11.23 & 0.49
            & 3.90 & 6.51 & 0.82
            & 4.81 & 8.15 & 0.71 \\
        \rowcolor{blue!10}
        \textbf{PHASE-Net (Ours)}
            & \textbf{4.03} & \textbf{6.21} & \textbf{0.85}
            & \textbf{3.51} & \textbf{5.18} & \textbf{0.89}
            & \textbf{2.91} & \textbf{4.23} & \textbf{0.92}
            & \textbf{3.48} & \textbf{5.21} & \textbf{0.89} \\
        \bottomrule
    \end{tabular}
    }
    \vspace{-0.6em}
\end{table*}

\begin{table}[t]
\vspace{-0.5em}
    \centering
    \caption{Efficiency analysis of model complexity.}
    \vspace{-0.7em}
    \label{tab:param_macs_comparison}
    \renewcommand\arraystretch{1}
    \setlength{\tabcolsep}{6pt}
    \resizebox{0.4\textwidth}{!}{
    \begin{tabular}{lcc}
        \toprule
        Method & Params (M)$\downarrow$ & MACs (G)$\downarrow$ \\
        \midrule
        PhysNet~\cite{Yu_2019_PhysNet}         & 0.77 & 56.1  \\
        DeepPhys~\cite{Chen_2018_DeepPhys}        & 7.50 & 96.0  \\
        EfficientPhys~\cite{liu2023efficientphys}   & 7.40 & 45.6  \\
        PhysFormer~\cite{yu2022physformer}      & 7.38 & 40.5  \\
        RhythmFormer ~\cite{zou2025rhythmformer}   & 4.21 & 28.8  \\
        Contrast-Phys+~\cite{sun2024contrastphysplus}  & 0.85 & 145.7 \\
        PhysMamba ~\cite{luo2024physmamba}      & 0.56 & 47.3  \\
        \midrule
        \rowcolor{blue!10}
        \textbf{PHASE-Net (Ours)} & \textbf{0.29} & \textbf{28.3} \\
        \bottomrule
    \end{tabular}}
    \vspace{-1.3em}
\end{table}

\begin{figure}[t]
  \centering
  \includegraphics[width=\linewidth]{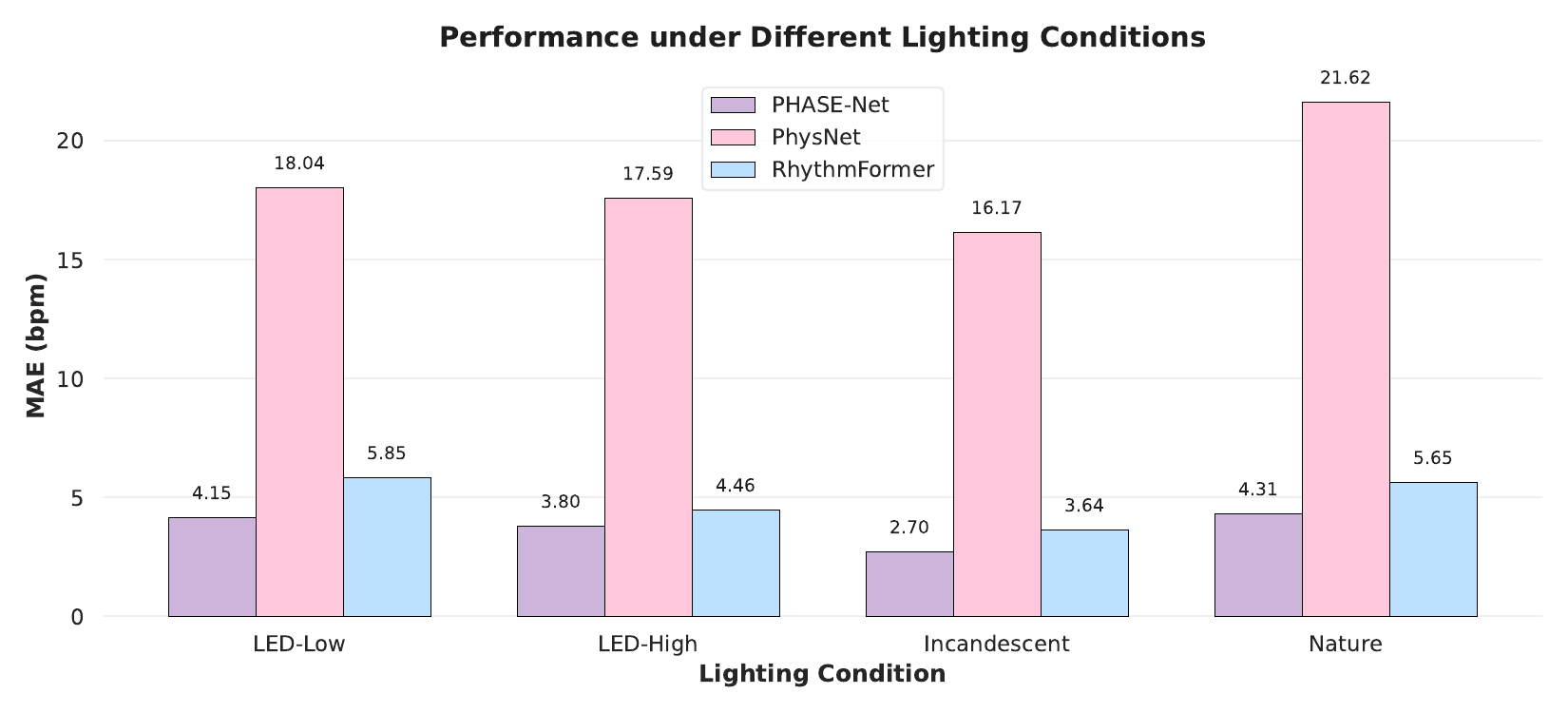}
  \vspace{-1.6em}
  \caption{MAE (bpm) of PHASE-Net, PhysNet, and RhythmFormer under four lighting conditions:
  LED-Low, LED-High, Incandescent, and Nature. Lower is better.}
  \label{fig:lighting-mae}
  \vspace{-1.0em}
\end{figure}

\vspace{0.3em}
\noindent\textbf{Efficiency Analysis.} \quad
We compare both parameter counts and multiply–accumulate operations (MACs) in Table~\ref{tab:param_macs_comparison}.
Under a 128$\times$128 spatial resolution and $T{=}128$ frames per clip, 
PHASE-Net requires only 0.29M parameters and 28.3G MACs, notably lower than most prior arts while maintaining state-of-the-art accuracy. This lightweight design enables faster inference and easier deployment on edge devices without sacrificing cross-domain robustness.

\subsection{ Robustness to Lighting Variations}
To further evaluate cross-illumination robustness, we measure the mean absolute error (MAE, bpm) of PHASE-Net, PhysNet~\cite{Yu_2019_PhysNet}, and RhythmFormer~\cite{zou2025rhythmformer} under four representative lighting settings (Fig.~\ref{fig:lighting-mae}). 
PHASE-Net consistently achieves the lowest error across all conditions— 4.15/3.80/2.70/4.31 bpm for LED-Low/ High/ Incandescent/ Nature—substantially outperforming RhythmFormer (5.85/4.46/3.64/5.65 bpm) and PhysNet (18.04/17.59/16.17/21.62 bpm). 
In particular, PHASE-Net maintains strong accuracy in the challenging Incandescent and Nature settings, demonstrating superior generalization to complex illumination and outdoor reflectance. 
These results confirm that PHASE-Net offers a tighter error bound and greater stability for real-world deployment under diverse lighting conditions.

\subsection{Ablation Study}
\noindent\textbf{Study of Different Modules.} \quad Under the same training and evaluation settings as the main results, we remove one module at a time from PHASE-Net and report RMSE results on UBFC-rPPG~\cite{Bobbia2017UBFCrPPG} and  PURE~\cite{Stricker2014PURE} datasets (see Fig. \ref{fig:ablation1}). The full model reaches 0.90 bpm on UBFC-rPPG~\cite{Bobbia2017UBFCrPPG} and 0.14 bpm on  PURE~\cite{Stricker2014PURE}.  On UBFC-rPPG~\cite{Bobbia2017UBFCrPPG}, the largest degradation appears when removing GTCN: 0.90$\rightarrow$1.26 bpm; removing Attention is also detrimental, while removing ZAS yields a smaller increase about 0.14 bpm.  On  PURE~\cite{Stricker2014PURE}, Attention is the most critical: 0.14$\rightarrow$0.36 bpm; ZAS and GTCN also help but with smaller margins.

Ablation studies reveal that all component removals degrade performance, highlighting their complementary roles. Attention is most critical in scenarios with strong local artifacts. The GTCN module contributes significantly by capturing longer-range rhythmic stability, while the ZAS module provides low-cost early temporal alignment, yielding consistent gains. Our full model, by combining these modules, achieves the lowest error across all  scenarios.

\begin{figure}[t]
  \centering
  \begin{subfigure}[b]{0.48\linewidth}
    \centering
    \includegraphics[width=\linewidth]{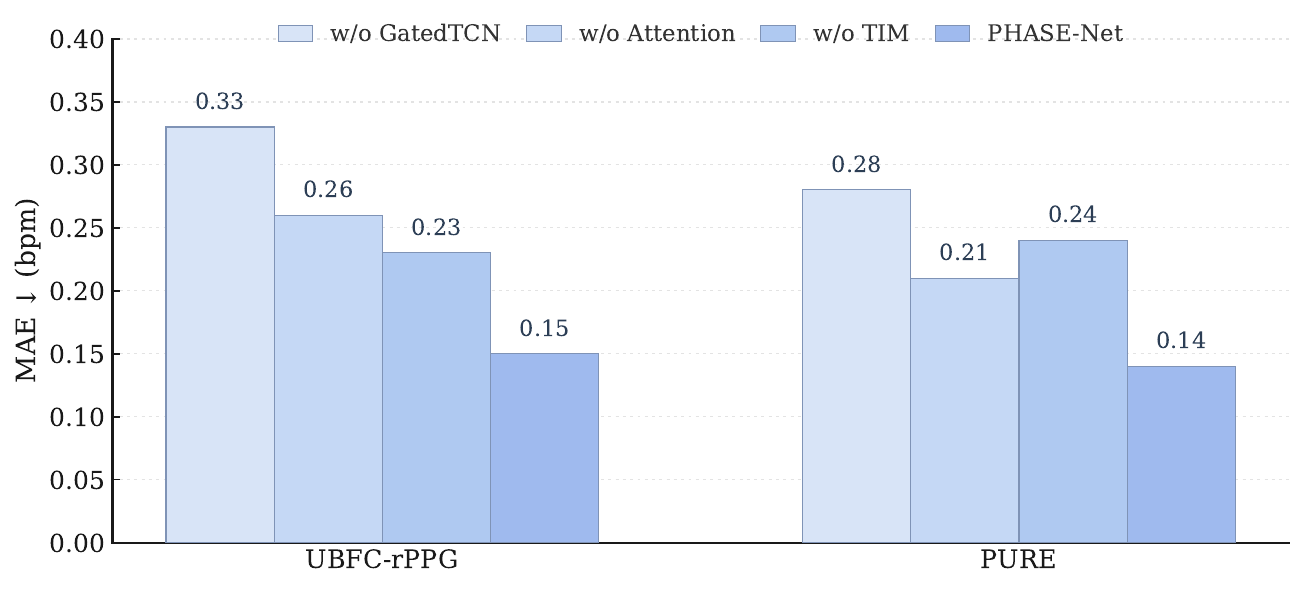}
    \caption{Module ablation study.}
    \label{fig:ablation1}
  \end{subfigure}
  \hfill
  \begin{subfigure}[b]{0.48\linewidth}
    \centering
    \includegraphics[width=\linewidth]{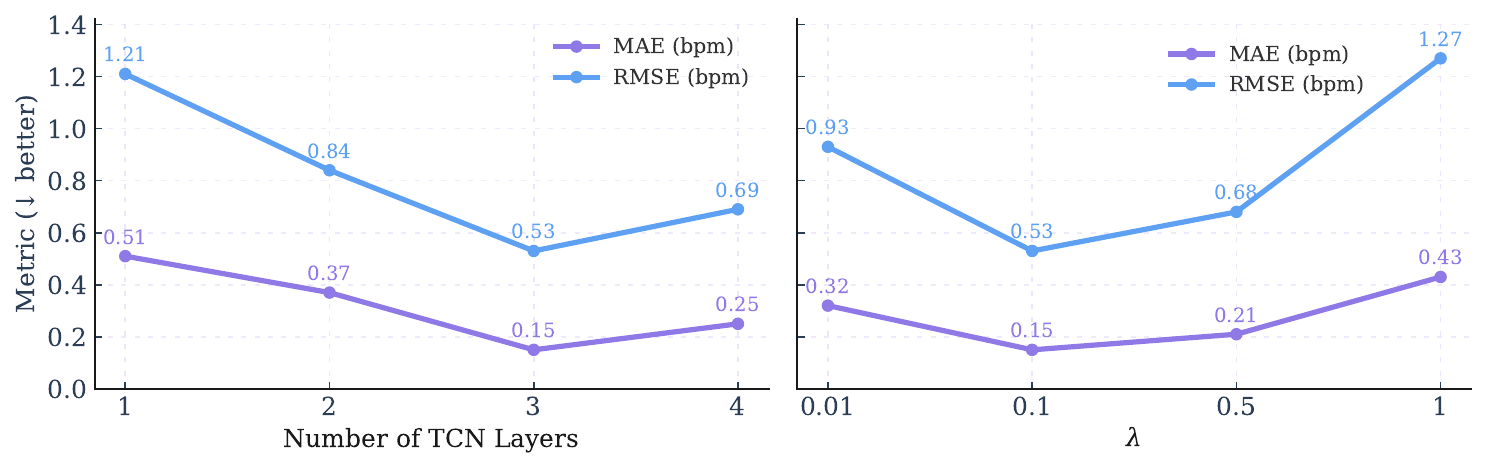}
    \caption{Ablation of TCN layers and $\lambda$ on UBFC-rPPG.}
    \label{fig:ablation2}
  \end{subfigure}
  \vspace{-0.6em}
  \caption{Comparison of different ablation studies.}
  \label{fig:ablation_overall}
  \vspace{-1.2em}
\end{figure}

\vspace{0.3em}
\noindent\textbf{Depth of the TCN backbone.} \quad
We vary the number of TCN layers from 1 to 4 and evaluate on UBFC-rPPG~\cite{Bobbia2017UBFCrPPG} (Fig.~\ref{fig:ablation2} left).
Performance consistently improves when increasing the depth from 1 to 3 layers:
MAE drops from 0.51 to 0.15~bpm  and RMSE from 1.21 to 0.53~bpm.
Adding a fourth layer slightly degrades the accuracy (MAE/RMSE $=$ 0.25/0.69).
We hypothesize that three layers provide a sufficient temporal receptive field for pulse dynamics, while deeper stacks start to over-smooth and complicate optimization.
Therefore, we set the default depth to 3.

\vspace{0.3em}
\noindent\textbf{Impact of the loss weight $\lambda$.} \quad
We sweep $\lambda \in \{0.01, 0.1, 0.5, 1\}$ to balance training objectives (Fig.~\ref{fig:ablation2} right). A clear U-shaped trend is observed: $\lambda=0.1$ achieves the best trade-off with MAE/RMSE $=$ 0.15/0.53 bpm. Compared to $\lambda=0.01$, this setting reduces MAE by 53.1\% and RMSE by 43.0\%.
Increasing $\lambda$ beyond 0.1 over-regularizes the model (e.g., $\lambda=1$: 0.43/1.27), while a too small weight under-utilizes the auxiliary objective (0.32/0.93 at $\lambda=0.01$).
Unless stated otherwise, we use \textbf{$\lambda=0.1$} in all experiments.

\vspace{0.3em}
\noindent\textbf{Ablation on ZAS.} \quad
We further investigate the influence of ZAS hyper-parameters by varying both the spatial block size $b$ and the number of swapped channel groups $p_c$.
As shown in Fig.~\ref{fig:zas_block_size} and Fig.~\ref{fig:zas-pc-ablation}, performance consistently peaks at $b=2$ and $p_c=2$. A fine-grained $2{\times}2$ spatial permutation provides sufficient cross-region mixing while preserving local structures, and a moderate channel-group swap delivers the strongest cross-domain robustness.
These results confirm that ZAS enhances generalization primarily through balanced spatial interaction rather than aggressive reordering.

\begin{figure}[t]
    \centering
    \begin{subfigure}[t]{0.48\linewidth}
        \centering
        \includegraphics[width=\linewidth]{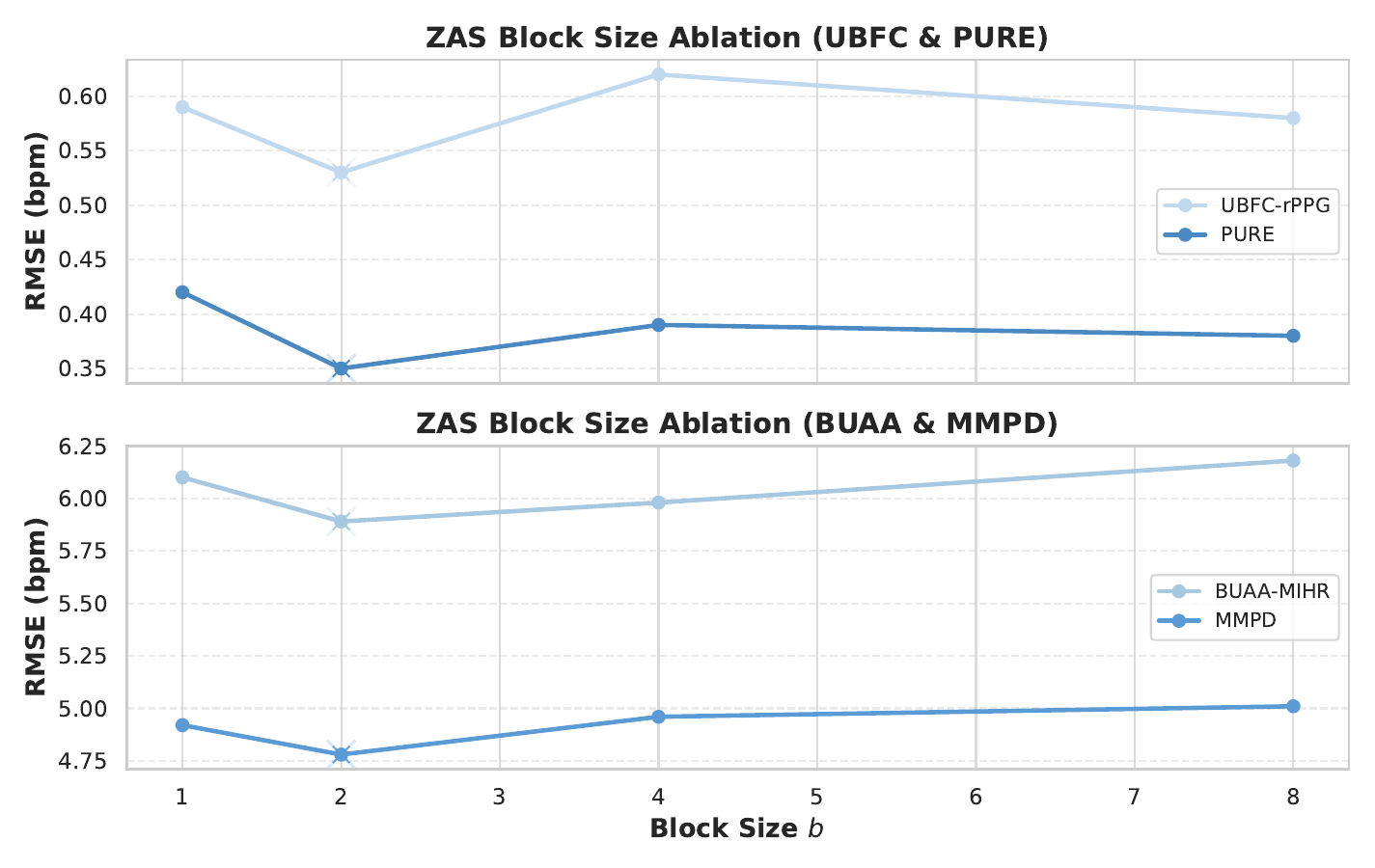}
        \caption{ZAS block size $b$.}
        \label{fig:zas_block_size}
    \end{subfigure}
    \hfill
    \begin{subfigure}[t]{0.48\linewidth}
        \centering
        \includegraphics[width=\linewidth]{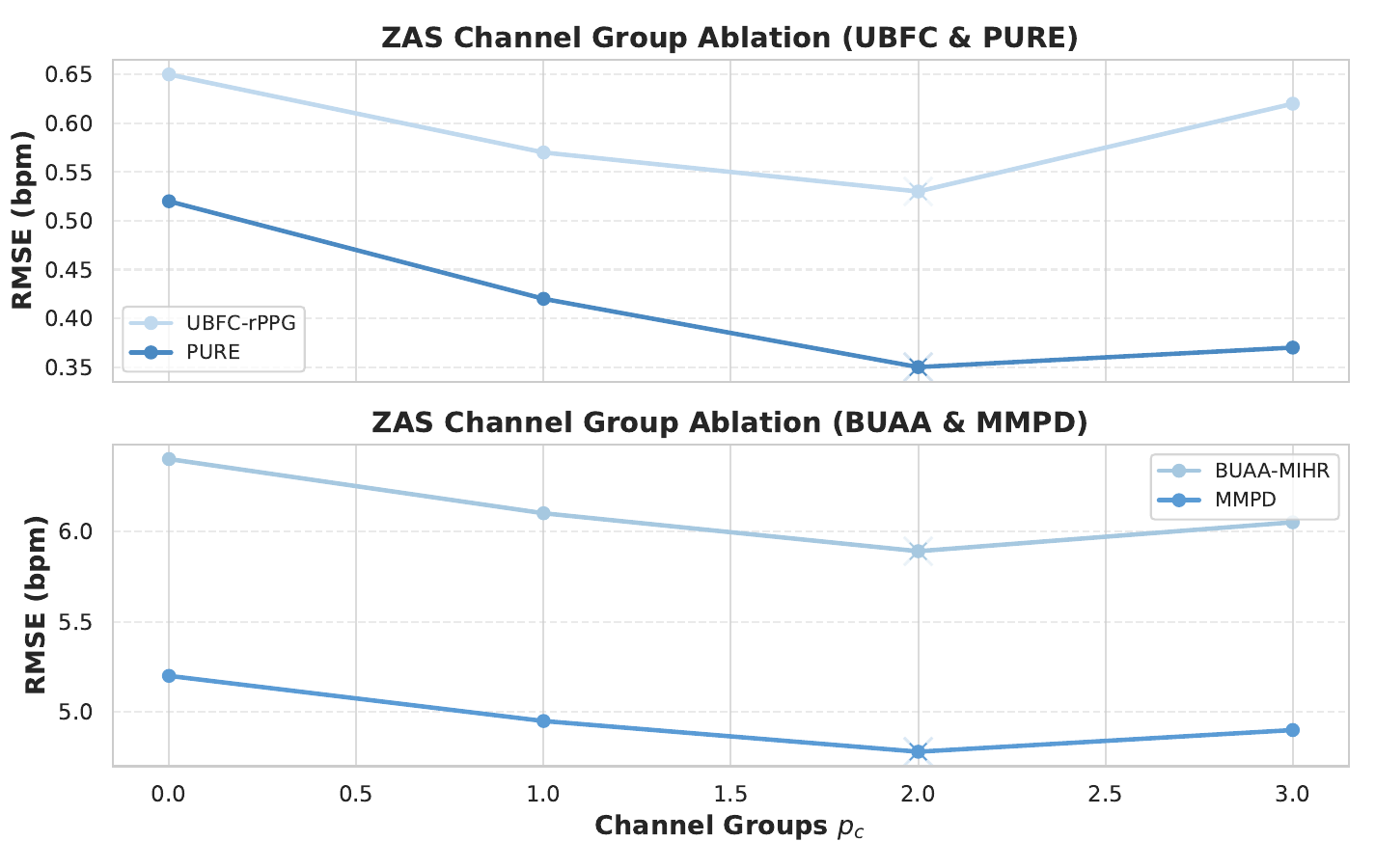}
        \caption{ZAS channel groups $p_c$.}
        \label{fig:zas-pc-ablation}
    \end{subfigure}
    \vspace{-0.6em}
    \caption{Ablations on the ZAS design: (a) block size $b$ and (b) channel groups $p_c$.}
    \label{fig:zas_ablation}
    \vspace{-1.3em}
\end{figure}

\section{Conclusion}
\label{sec:conclusion}
In this paper, we introduced PHASE-Net, a physics-grounded rPPG model that embodies a damped harmonic oscillator through a causal convolution. The design couples an adaptive spatial filter and a Zero-FLOPs Axial Swapper (ZAS) with a compact GTCN. Experiments demonstrate a strong balance of accuracy, cross-domain robustness, and efficiency. We hope this work encourages moving from heuristic stacking toward principled, task-specific inductive biases for modeling physiological signals from video.
Building on this foundation, future work can explore extending the physics-based formulation to multi-task physiological sensing, such as respiration or blood pressure. 


\section{Acknowledge}
\label{ack}
This work was supported by National Natural Science Foundation of China (Grant No. 62306061 and 62576076), Guangdong Research Team for Communication and Sensing Integrated with Intelligent Computing (Project No. 2024KCXTD047), Science and Technology Development Fund of Macao (0009/2025/ITP1) ,the Macao Polytechnic University Grant (RP/FCA-16/2025).  and sponsored by CCF-Tencent Rhino-Bird Open Research Fund. The computational resources are supported by SongShan Lake HPC Center (SSL-HPC) in Great Bay University.
{
    \small
    \bibliographystyle{ieeenat_fullname}
    \bibliography{main}
}

\maketitlesupplementary

\appendix
\section{Introduction to the Datasets}
\label{sec:datasets}
\textbf{UBFC-rPPG~\cite{Bobbia2017UBFCrPPG}} contains 42 RGB facial videos from 42 distinct subjects. Each video is captured at 640×480 pixel resolution and 30 frames per second (fps). Recordings take place under varied lighting conditions, including natural sunlight and indoor artificial illumination. Ground-truth physiological signals are recorded via a CMS50E pulse oximeter at 60 Hz, ensuring precise temporal alignment for evaluation.

\textbf{PURE~\cite{Stricker2014PURE}} comprises 60 high-quality RGB videos collected from 10 subjects performing six different head movement scenarios (static, talking, translation movements, etc.). Videos are recorded at 30 fps under consistent indoor lighting and controlled background settings, minimizing external interference. Synchronized physiological measurements are obtained using a CMS50E oximeter sampling at 60 Hz. PURE is particularly valuable for evaluating rPPG performance during facial movements.

\textbf{BUAA~\cite{Xi2020BUAAMIHR}}  is designed to assess algorithmic robustness across varying illumination intensities. The dataset features video sequences recorded under a range of controlled lighting conditions, from low-light (below 10 lux) to normal brightness. In our experiments, we only utilize videos captured under illumination levels $\geq$10 lux, as extremely dim lighting introduces significant image degradation requiring specialized enhancement techniques beyond this study's scope.

\textbf{MMPD~\cite{Tang2023MMPD}} comprises 660 videos, each lasting one minute, collected from 33 subjects with diverse skin tones and gender distributions. Each video is recorded at 30 fps with a resolution of 320×240 pixels, under four distinct lighting conditions (bright, warm, dim, and colored lighting). Subjects perform various daily activities, introducing intra-subject variability and further increasing dataset complexity.

\section{Implementation Details}
\label{sec:Details}
Our PHASE-Net is implemented using PyTorch. The input to the network is a sequence of 128 frames, resized to $128 \times 128$. We trained the model for 15 epochs using the Adam optimizer with a learning rate of $10^{-4}$ and a batch size of 4. The loss function hyperparameter was set to $\lambda=0.1$. All experiments were conducted on a single NVIDIA H100 GPU.

\section{Detailed Derivation of the Physics-Informed Temporal Model}
\label{app:derivation}

This appendix provides the detailed mathematical derivations for the physics-informed temporal model, as summarized in Section~\ref{sec:physics_informed_modeling}.
\subsection{Derivation of the Damped Wave Equation (PDE)}
\label{app:pde_derivation}

Our goal is to derive a single equation for the pressure pulsation $p'$ from the 1D linearized equations for momentum and continuity:
\begin{align}
\rho \frac{\partial u'}{\partial t} + k u' &= -\frac{\partial p'}{\partial x} \label{eq:app_1d_momentum} \\
\frac{\partial Q'}{\partial x} &= -C \frac{\partial p'}{\partial t} \label{eq:app_1d_continuity}
\end{align}
where $Q' = A u'$ is the flow rate, and $A$ is the cross-sectional area of the vessel. The derivation proceeds in the following steps:

\begin{enumerate}
    \item We take the partial derivative of the momentum equation (Eq.~\ref{eq:app_1d_momentum}) with respect to the spatial variable $x$:
    $$ \frac{\partial}{\partial x} \left( \rho \frac{\partial u'}{\partial t} + k u' \right) = \frac{\partial}{\partial x} \left( -\frac{\partial p'}{\partial x} \right) $$
    Assuming fluid properties $\rho, k$ are locally uniform and swapping the order of differentiation, we get:
    \begin{equation}
    \rho \frac{\partial}{\partial t}\left(\frac{\partial u'}{\partial x}\right) + k \left(\frac{\partial u'}{\partial x}\right) = -\frac{\partial^2 p'}{\partial x^2}
    \label{eq:app_deriv_momentum}
    \end{equation}

    \item We relate the velocity gradient $\frac{\partial u'}{\partial x}$ to the flow rate gradient $\frac{\partial Q'}{\partial x}$. Since $Q' = A u'$, under the small pulsation assumption, the area $A$ can be approximated by its mean value $\bar{A}$, so $Q' \approx \bar{A} u'$. Taking the spatial derivative yields:
    \begin{equation}
    \frac{\partial u'}{\partial x} \approx \frac{1}{\bar{A}}\frac{\partial Q'}{\partial x}
    \label{eq:app_u_q_relation}
    \end{equation}

    \item We substitute Eq.~\ref{eq:app_u_q_relation} into Eq.~\ref{eq:app_deriv_momentum} to replace the velocity gradient with the flow rate gradient:
    $$ \rho \frac{\partial}{\partial t}\left(\frac{1}{\bar{A}}\frac{\partial Q'}{\partial x}\right) + \frac{k}{\bar{A}} \left(\frac{\partial Q'}{\partial x}\right) = -\frac{\partial^2 p'}{\partial x^2} $$

    \item Finally, we use the continuity equation (Eq.~\ref{eq:app_1d_continuity}) to replace the flow rate gradient term $\frac{\partial Q'}{\partial x}$ with the pressure term $-C \frac{\partial p'}{\partial t}$:
    $$ \frac{\rho}{\bar{A}} \frac{\partial}{\partial t}\left(-C \frac{\partial p'}{\partial t}\right) + \frac{k}{\bar{A}} \left(-C \frac{\partial p'}{\partial t}\right) = -\frac{\partial^2 p'}{\partial x^2} $$
    Rearranging the terms, we obtain:
    $$ \frac{\rho C}{\bar{A}} \frac{\partial^2 p'}{\partial t^2} + \frac{k C}{\bar{A}} \frac{\partial p'}{\partial t} = \frac{\partial^2 p'}{\partial x^2} $$
    
    \item By defining new physical constants for wave speed squared ($c^2 := \frac{\bar{A}}{\rho C}$) and a damping-related coefficient, we arrive at the final Damped Wave Equation presented in the main text:
    \begin{equation}
    \frac{\partial^2 p'}{\partial t^2} + \alpha \frac{\partial p'}{\partial t} = c^2 \frac{\partial^2 p'}{\partial x^2}
    \end{equation}
\end{enumerate}

\subsection{Discretization and State-Space Formulation}
\label{app:ssm_derivation}

We start with the second-order ODE for the damped harmonic oscillator:
\begin{equation}
\frac{d^2 z(t)}{dt^2} + \alpha \frac{d z(t)}{dt} + \omega^2 z(t) = u(t)
\end{equation}
First, we convert this into a system of two first-order ODEs by defining the state vector $\mathbf{x}(t) = [z(t), v(t)]^T$, where $v(t) = \frac{dz(t)}{dt}$ is the velocity.
\begin{align*}
    \frac{dz(t)}{dt} &= v(t) \\
    \frac{dv(t)}{dt} &= -\alpha v(t) - \omega^2 z(t) + u(t)
\end{align*}
We discretize this system using a semi-implicit Euler method with a time step $\Delta t$. Let $z_t \approx z(t \Delta t)$ and $a_t \approx u(t \Delta t)$. The update rules are:
\begin{align}
v_t &= v_{t-1} + \Delta t \cdot (-\alpha v_t - \omega^2 z_{t-1} + a_t) \label{eq:app_euler_v}\\
z_t &= z_{t-1} + \Delta t \cdot v_t \label{eq:app_euler_z}
\end{align}
We first solve for $v_t$ from Eq.~\ref{eq:app_euler_v}:
$$ (1+\alpha\Delta t)v_t = v_{t-1} - \omega^2\Delta t z_{t-1} + \Delta t a_t $$
$$ v_t = \frac{1}{1+\alpha\Delta t}v_{t-1} - \frac{\omega^2\Delta t}{1+\alpha\Delta t}z_{t-1} + \frac{\Delta t}{1+\alpha\Delta t}a_t $$
Substituting this into Eq.~\ref{eq:app_euler_z} gives the update for $z_t$:
$$ z_t = z_{t-1} + \Delta t \left( \frac{1}{1+\alpha\Delta t}v_{t-1} - \frac{\omega^2\Delta t}{1+\alpha\Delta t}z_{t-1} + \frac{\Delta t}{1+\alpha\Delta t}a_t \right) $$
$$ z_t = \left(1 - \frac{\omega^2\Delta t^2}{1+\alpha\Delta t}\right)z_{t-1} + \frac{\Delta t}{1+\alpha\Delta t}v_{t-1} + \frac{\Delta t^2}{1+\alpha\Delta t}a_t $$
We can now write these two update rules in the standard LTI State-Space Model form $\mathbf{x}_t = \mathbf{A} \mathbf{x}_{t-1} + \mathbf{B} a_t$, where $\mathbf{x}_t = [z_t, v_t]^T$:
\begin{equation}
\mathbf{x}_t
=
\underbrace{\begin{bmatrix}
1-\dfrac{\omega^2\Delta t^2}{1+\alpha\Delta t} & \dfrac{\Delta t}{1+\alpha\Delta t}\\[8pt]
-\dfrac{\omega^2\Delta t}{1+\alpha\Delta t} & \dfrac{1}{1+\alpha\Delta t}
\end{bmatrix}}_{\displaystyle \mathbf{A}}
\mathbf{x}_{t-1}
+
\underbrace{\begin{bmatrix}
\dfrac{\Delta t^2}{1+\alpha\Delta t}\\[8pt]
\dfrac{\Delta t}{1+\alpha\Delta t}
\end{bmatrix}}_{\displaystyle \mathbf{B}}
a_t
\end{equation}
The output equation is simply $z_t = \mathbf{C} \mathbf{x}_t$, with $\mathbf{C} = [1 \quad 0]$.

\subsection{Proofs of Propositions}
\label{app:proofs}

\begin{proposition}[Equivalence to Causal Convolution]
The solution $z_t$ of the LTI system $\mathbf{x}_t = \mathbf{A}\mathbf{x}_{t-1} + \mathbf{B}a_t, z_t = \mathbf{C}\mathbf{x}_t$ can be expressed as a causal convolution of all past inputs.
\end{proposition}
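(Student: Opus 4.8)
The plan is to prove the claim by exactly unrolling the state recurrence and then passing to a limit, which splits the argument into an elementary algebraic part and an analytic convergence part. First I would fix a horizon $N$ and repeatedly substitute the recurrence $\mathbf{x}_{t-1} = \mathbf{A}\mathbf{x}_{t-2} + \mathbf{B}a_{t-1}$ into $\mathbf{x}_t = \mathbf{A}\mathbf{x}_{t-1} + \mathbf{B}a_t$. A direct induction on $N$ yields the exact identity
$$\mathbf{x}_t = \mathbf{A}^N \mathbf{x}_{t-N} + \sum_{m=0}^{N-1} \mathbf{A}^m \mathbf{B}\, a_{t-m},$$
with base case $N=1$ given by the recurrence itself and the inductive step obtained by applying the recurrence once more to $\mathbf{x}_{t-N}$. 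Applying the readout $\mathbf{C}$ already produces the truncated convolution $\sum_{m=0}^{N-1}\mathbf{C}\mathbf{A}^m\mathbf{B}\,a_{t-m}$ together with a residual ``memory'' term $\mathbf{C}\mathbf{A}^N\mathbf{x}_{t-N}$.

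The second part is to send $N\to\infty$ and show the residual vanishes, which is precisely where the physical damping enters. I would establish that $\mathbf{A}$ is a Schur-stable matrix, $\rho(\mathbf{A})<1$, directly from the explicit $2\times2$ form derived in Appendix~\ref{app:ssm_derivation}. The cleanest route is to write the characteristic polynomial $p(\lambda)=\lambda^2-(\operatorname{tr}\mathbf{A})\lambda+\det\mathbf{A}$ and apply the Schur--Cohn (Jury) criterion, whose conditions are $p(1)>0$, $p(-1)>0$, and $|\det\mathbf{A}|<1$. A short computation gives $\det\mathbf{A}=\tfrac{1}{1+\alpha\Delta t}$, so $|\det\mathbf{A}|<1$ is equivalent to positive damping $\alpha>0$; likewise $p(1)=\tfrac{\omega^2\Delta t^2}{1+\alpha\Delta t}>0$ holds for $\omega\neq0$, and $p(-1)>0$ reduces to the mild step-size condition $\omega^2\Delta t^2<4+2\alpha\Delta t$, satisfied whenever the frame interval is small relative to the pulse period. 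Under these conditions there exist $c>0$ and $\gamma\in(0,1)$ with $\|\mathbf{A}^N\|\le c\gamma^N$, so for any bounded input sequence both the residual $\mathbf{C}\mathbf{A}^N\mathbf{x}_{t-N}\to0$ and the tail of the infinite series converge absolutely.

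Combining the two parts, the limit gives $\mathbf{x}_t=\sum_{m=0}^{\infty}\mathbf{A}^m\mathbf{B}\,a_{t-m}$, and applying $\mathbf{C}$ yields $z_t=\sum_{m=0}^{\infty}\mathbf{C}\mathbf{A}^m\mathbf{B}\,a_{t-m}$; setting $g[m]:=\mathbf{C}\mathbf{A}^m\mathbf{B}$ completes the proof, with causality immediate since only indices $m\ge0$ (hence only past and present inputs $a_{t-m}$) appear. I expect the main obstacle to be the stability step rather than the unrolling: one must genuinely verify $\rho(\mathbf{A})<1$ for this particular semi-implicit discretization, since otherwise the series need not converge and the memory term would not vanish. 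A less physical but simpler alternative would be to assume rest initial conditions together with $\sum_{m}\|\mathbf{A}^m\mathbf{B}\|<\infty$ as hypotheses; tying convergence directly to $\alpha>0$ via Schur--Cohn is, however, more faithful to the paper's physics-grounded narrative.
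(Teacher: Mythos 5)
Your proposal is correct, and it takes a genuinely more careful route than the paper on the one point the paper waves through. The paper's Appendix~\ref{app:proofs} proof unrolls the recurrence from time $0$, assumes rest initial conditions $\mathbf{x}_0=\mathbf{0}$, obtains the exact finite sum $z_t=\sum_{m=0}^{t-1}\mathbf{C}\mathbf{A}^m\mathbf{B}\,a_{t-m}$, and then extends it to an infinite sum purely formally by declaring $a_k=0$ for $k<0$; stability ($\rho(\mathbf{A})<1$) is only asserted in a closing sentence, ``for a damped system,'' without ever examining the actual matrix. You instead unroll to horizon $N$, retain the memory term $\mathbf{C}\mathbf{A}^N\mathbf{x}_{t-N}$, and kill it in the limit by verifying Schur stability of the specific semi-implicit Euler matrix through the Jury test. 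Your computations check out against the matrix of Appendix~\ref{app:ssm_derivation}: the cross terms $\pm\,\omega^2\Delta t^2/(1+\alpha\Delta t)^2$ cancel so that $\det\mathbf{A}=\frac{1}{1+\alpha\Delta t}$, one finds $p(1)=\frac{\omega^2\Delta t^2}{1+\alpha\Delta t}>0$ for $\omega\neq 0$, and $p(-1)>0$ is equivalent to $\omega^2\Delta t^2<4+2\alpha\Delta t$. What your route buys: it handles doubly infinite input histories without the artificial $\mathbf{x}_0=\mathbf{0}$ convention, and it actually establishes the fact $\rho(\mathbf{A})<1$ --- with an explicit positive-damping and step-size condition --- on which the paper's second proposition (FIR approximation, via $\|\mathbf{A}^m\|\le K\rho^m$) and the generalization bounds of Appendix~\ref{app:gen-proof-pathb} silently rely. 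What the paper's route buys is brevity: under rest initial conditions the finite-sum identity is exact at every $t$ and no limit or spectral analysis is needed for this proposition alone. One small caveat in your limit step: for the residual $\mathbf{C}\mathbf{A}^N\mathbf{x}_{t-N}$ to vanish you need the state trajectory bounded, not merely the inputs; this follows from stability plus bounded inputs (bounded-input/bounded-state), or from your fallback rest hypothesis, so it deserves one line in a final write-up but is not a gap.
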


\begin{proof}
By unrolling the state-space recurrence relation, we get:
\begin{align*}
    \mathbf{x}_t &= \mathbf{A}\mathbf{x}_{t-1} + \mathbf{B}a_t \\
    &= \mathbf{A}(\mathbf{A}\mathbf{x}_{t-2} + \mathbf{B}a_{t-1}) + \mathbf{B}a_t \\
    &= \mathbf{A}^2\mathbf{x}_{t-2} + \mathbf{A}\mathbf{B}a_{t-1} + \mathbf{B}a_t \\
    &= \dots \\
    &= \mathbf{A}^t \mathbf{x}_0 + \sum_{m=0}^{t-1} \mathbf{A}^{m} \mathbf{B} a_{t-m}
\end{align*}
Assuming zero initial conditions ($\mathbf{x}_0 = \mathbf{0}$), the state is solely determined by the history of inputs:
$$ \mathbf{x}_t = \sum_{m=0}^{t-1} \mathbf{A}^{m} \mathbf{B} a_{t-m} $$
Applying the output equation $z_t = \mathbf{C}\mathbf{x}_t$:
$$ z_t = \mathbf{C} \sum_{m=0}^{t-1} \mathbf{A}^{m} \mathbf{B} a_{t-m} = \sum_{m=0}^{t-1} (\mathbf{C} \mathbf{A}^{m} \mathbf{B}) a_{t-m} $$
We can extend the sum to infinity by defining the kernel $g[m] = \mathbf{C}\mathbf{A}^m\mathbf{B}$ for $m \ge 0$ and assuming a causal system where $a_k=0$ for $k<0$. This gives the convolution form:
$$ z_t = \sum_{m=0}^{\infty} g[m] a_{t-m} $$
For a damped system, the spectral radius $\rho(\mathbf{A}) < 1$, ensuring the IIR filter is stable.
\end{proof}

\begin{proposition}[FIR Approximation]
The IIR convolution can be approximated with arbitrary precision $\varepsilon$ by a Finite Impulse Response (FIR) filter of sufficient length $R$.
\end{proposition}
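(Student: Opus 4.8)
The plan is to truncate the infinite impulse response at length $R$, defining the FIR output $\hat{z}_t = \sum_{m=0}^{R-1} g[m]\, a_{t-m}$, and then to show that the approximation error $|z_t - \hat{z}_t| = \big|\sum_{m=R}^{\infty} g[m]\, a_{t-m}\big|$ can be driven below any prescribed tolerance $\varepsilon$ by choosing $R$ sufficiently large. The whole argument rests on the geometric decay of the kernel $g[m] = \mathbf{C}\mathbf{A}^m\mathbf{B}$, which is inherited from the stability condition $\rho(\mathbf{A}) < 1$ already established at the end of the proof of the preceding proposition.

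First I would make the decay of $g[m]$ quantitative. Since the spectral radius satisfies $\rho(\mathbf{A}) < 1$, I can fix any ratio $r$ with $\rho(\mathbf{A}) < r < 1$; by Gelfand's formula (equivalently, via the Jordan-form representation of $\mathbf{A}$) there exists a constant $M \ge 1$, depending only on $\mathbf{A}$ and $r$, such that $\|\mathbf{A}^m\| \le M r^m$ for all $m \ge 0$. Submultiplicativity of the matrix norm then gives $|g[m]| \le \|\mathbf{C}\|\,\|\mathbf{B}\|\, M\, r^m =: K r^m$, so the kernel decays at least geometrically. Next I would bound the tail: assuming the discretized driving force is bounded, $|a_{t-m}| \le A_{\max}$ (a physically reasonable hypothesis, since $u(t)$ models finite external perturbations; alternatively one works with an $\ell^2$ energy bound and Cauchy--Schwarz), the triangle inequality yields $|z_t - \hat{z}_t| \le K A_{\max} \sum_{m=R}^{\infty} r^m = K A_{\max}\, r^{R}/(1-r)$.

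Because $r<1$, this bound vanishes as $R \to \infty$, and solving $K A_{\max}\, r^{R}/(1-r) < \varepsilon$ for $R$ produces an explicit sufficient length $R > \log\!\big(\varepsilon(1-r)/(K A_{\max})\big)/\log r$ (noting $\log r < 0$). This completes the construction, since the truncated sum $\hat{z}_t$ is exactly the finite causal convolution computed by a length-$R$ TCN kernel, so a TCN realizes the physical dynamics to within $\varepsilon$.

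The main obstacle I anticipate is the quantitative decay step: establishing the uniform bound $\|\mathbf{A}^m\| \le M r^m$ rather than the naive $\|\mathbf{A}^m\| \le \|\mathbf{A}\|^m$, since $\|\mathbf{A}\|$ may exceed $1$ even when $\rho(\mathbf{A}) < 1$ (the discretized oscillator matrix is non-normal). The resolution is to invoke the \emph{spectral radius} rather than the operator norm, reserving a margin $r > \rho(\mathbf{A})$ so that the polynomial growth factors arising from Jordan blocks --- which appear precisely in the critically-damped, repeated-eigenvalue case --- are absorbed into the geometric factor $r^m$. With that margin in place, the remainder of the estimate is a routine geometric-series calculation.
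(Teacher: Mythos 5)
Your proposal is correct and takes essentially the same approach as the paper's proof: truncate the impulse response at length $R$, use the stability-induced geometric decay $\|\mathbf{A}^m\|\le K\rho^m$ together with a bounded-input assumption to bound the tail $\bigl|\sum_{m=R}^{\infty} g[m]\,a_{t-m}\bigr|$ by the geometric series $K M_{\mathrm{in}}\|\mathbf{C}\|\|\mathbf{B}\|\,\rho^{R}/(1-\rho)$, and solve this for the required length $R$. The only difference is cosmetic: you explicitly justify the uniform decay bound via Gelfand's formula with a margin $r>\rho(\mathbf{A})$ (handling non-normality and Jordan blocks), a detail the paper simply asserts as ``guaranteed for a stable system.''
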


\begin{proof}
The error introduced by truncating the infinite sum (the IIR filter kernel $g[m]$) at length $R-1$ is the tail of the sum:
$$ e_t = \left| \sum_{m=0}^{\infty} g[m] a_{t-m} - \sum_{m=0}^{R-1} g[m] a_{t-m} \right| = \left| \sum_{m=R}^{\infty} g[m] a_{t-m} \right| $$
Let the input be bounded, $\|a_t\|_{\infty} \le M_{in}$, and the matrix norms be bounded such that $\|\mathbf{A}^m\| \le K \rho^m$ for some constants $K>0$ and $0 < \rho < 1$ (guaranteed for a stable system). We can bound the error:
\begin{align*}
    \|e_t\|_{\infty} &\le \sum_{m=R}^{\infty} \|\mathbf{C}\| \|\mathbf{A}^m\| \|\mathbf{B}\| \|a_{t-m}\|_{\infty} \\
    &\le \sum_{m=R}^{\infty} \|\mathbf{C}\| (K \rho^m) \|\mathbf{B}\| M_{in} \\
    &= K M_{in} \|\mathbf{C}\| \|\mathbf{B}\| \sum_{m=R}^{\infty} \rho^m
\end{align*}
The last term is a geometric series, which sums to $\frac{\rho^R}{1-\rho}$. Therefore:
$$ \|e_t\|_{\infty} \le K M_{in} \|\mathbf{C}\| \|\mathbf{B}\| \frac{\rho^R}{1-\rho} $$
To ensure the error is less than a desired precision $\varepsilon$, we require:
$$ K M_{in} \|\mathbf{C}\| \|\mathbf{B}\| \frac{\rho^R}{1-\rho} \le \varepsilon $$
Solving for $R$ gives the required receptive field length (filter size):
$$ R \ge \frac{\log\left(\frac{K M_{in} \|\mathbf{C}\| \|\mathbf{B}\|}{\varepsilon(1-\rho)}\right)}{\log(1/\rho)} $$
This shows that a finite kernel length $R$ is sufficient to approximate the true physical dynamics to any desired precision.
\end{proof}

\section{Generalization Theory of PHASE-Net}
\label{app:gen-proof-pathb}

\paragraph{Problem Setup.}
Consider the stable linear time–invariant (LTI) system derived from the
physics model:
\[
\begin{aligned}
\mathbf{x}_t &= \mathbf{A}\mathbf{x}_{t-1} + \mathbf{B}a_t , \\
z_t &= \mathbf{C}\mathbf{x}_t
    = \sum_{m=0}^{\infty} g[m]\,a_{t-m}, \\
g[m] &= \mathbf{C}\mathbf{A}^m\mathbf{B}.
\end{aligned}
\]

In the network implementation we use a finite--length causal convolution.
Let the temporal window length be $R$, define the input vector
\[
\phi_t=(a_t,a_{t-1},\dots,a_{t-R+1})\in\mathbb{R}^R ,
\]
and the truncated FIR coefficient vector
\[
w=(g[0],g[1],\dots,g[R-1]) .
\]
The predictor can be written as
\[
f(\phi_t)=\langle w,\phi_t\rangle .
\]

\paragraph{Physical Facts.}
\textbf{Fact 1 (Stability).}
Causality and spectral normalization guarantee
$\rho(\mathbf{A})<1$.
Hence there exist constants $K>0$ and $0<\rho<1$ such that
\[
\|\mathbf{A}^m\|\le K\rho^m ,\quad \forall m\ge0 .
\]

\textbf{Fact 2 (Magnitude and Norm Bounds).}
The input amplitude is bounded by $M_{\mathrm{in}}$.
Weight regularization ensures
$\|\mathbf{B}\|\le B_0$ and $\|\mathbf{C}\|\le C_0$.
Therefore the $\ell_1$ norm of the convolution kernel satisfies
\[
\|w\|_1
=\sum_{m=0}^{R-1}|g[m]|
\le \sum_{m=0}^{\infty} C_0 K B_0 \rho^{m}
= \frac{U}{1-\rho},\qquad U \triangleq C_0 K B_0 .
\]

\textbf{Fact 3 (FIR Truncation Error).}
Because $|g[m]|\le U\rho^m$,
\[
\sum_{m=R}^{\infty}|g[m]|
\le \frac{U\rho^{R}}{1-\rho}.
\]
Since $\|a_t\|_\infty \le M_{\mathrm{in}}$,
the difference between the infinite IIR output and the length--$R$ FIR
output satisfies
\[
| z_t - z_t^{(R)} |
\le \frac{U}{1-\rho} M_{\mathrm{in}} \rho^{R}
\triangleq \Gamma \rho^{R}.
\]
This term can be made arbitrarily small by increasing $R$.

\paragraph{Rademacher Complexity.}
Consider samples $\{\phi_i\}_{i=1}^n$ with
$\|\phi_i\|_\infty \le M_{\mathrm{in}}$.
The empirical Rademacher complexity is
\[
\widehat{\mathfrak{R}}_n
= \mathbb{E}_{\sigma}\Big[
\sup_{\|w\|_1\le L}
\frac{1}{n}\sum_{i=1}^n
\sigma_i \langle w , \phi_i \rangle
\Big],
\]
where $\sigma_i$ are independent Rademacher variables
and $L = U/(1-\rho)$.

\textbf{Step 1 (Dual Norm Representation).}
By $\ell_1$--$\ell_\infty$ duality,
\[
\widehat{\mathfrak{R}}_n
= \frac{L}{n}\,
\mathbb{E}_{\sigma}
\Big\|
\sum_{i=1}^n \sigma_i \phi_i
\Big\|_\infty .
\]

\textbf{Step 2 (Bounding the Maximal Coordinate).}
For any coordinate $j\le R$,
the random variable
$\sum_{i=1}^n \sigma_i \phi_{i,j}$
has magnitude at most $n M_{\mathrm{in}}$.
Khintchine--Kahane inequality together with a union bound yields
\[
\mathbb{E}_{\sigma}
\max_{1\le j\le R}
\Big|
\sum_{i=1}^n \sigma_i \phi_{i,j}
\Big|
\le
M_{\mathrm{in}}
\sqrt{2 n \log (2 R)} .
\]

\textbf{Step 3 (Complexity Bound).}
Substituting the above into the dual form gives
\[
\widehat{\mathfrak{R}}_n
\le
L M_{\mathrm{in}}
\sqrt{ \frac{ 2 \log (2 R) }{ n } } .
\]
Taking expectation shows that the true Rademacher complexity satisfies
\[
\mathfrak{R}_n
\le
\frac{U}{1-\rho}
M_{\mathrm{in}}
\sqrt{ \frac{ 2 \log (2 R) }{ n } } .
\]

\paragraph{Source-Domain Generalization.}
Let the loss $\ell$ be $L_\ell$--Lipschitz and bounded in $[0,1]$.
By the standard Rademacher generalization inequality,
with probability at least $1-\delta$ over the random draw of the training
set,
\[
\mathcal{E}_{\mathrm{src}}(f)
\le
\widehat{\mathcal{E}}_n(f)
+ 2 L_\ell \mathfrak{R}_n
+ 3 \sqrt{ \frac{ \log (2/\delta) }{ 2 n } }
+ O( \rho^{R} ) .
\]
Plugging in the bound on $\mathfrak{R}_n$ gives
\[
\mathcal{E}_{\mathrm{src}}(f)
\le
\widehat{\mathcal{E}}_n(f)
+ O\!\left( \sqrt{ \tfrac{ \log R }{ n } } \right)
+ O( \rho^{R} ) .
\]

\paragraph{Target-Domain Risk.}
Let $\mathbb{P}_{\mathrm{src}}$ and $\mathbb{P}_{\mathrm{tgt}}$ denote the
source and target distributions, and
$\mathrm{W}_1$ their $1$--Wasserstein distance.
Since $f$ is $L_f$--Lipschitz with
\[
L_f \le \|w\|_1 \le \frac{U}{1-\rho},
\]
the discrepancy between source and target satisfies
\[
\mathrm{Disc}
\le
L_\ell L_f
\mathrm{W}_1(\mathbb{P}_{\mathrm{src}}, \mathbb{P}_{\mathrm{tgt}})
\le
L_\ell \frac{U}{1-\rho}
\mathrm{W}_1(\mathbb{P}_{\mathrm{src}}, \mathbb{P}_{\mathrm{tgt}}).
\]
By the triangle inequality,
\[
\mathcal{E}_{\mathrm{tgt}}(f)
\le
\mathcal{E}_{\mathrm{src}}(f) + \mathrm{Disc}.
\]
Combining with the source bound yields
\[
\mathcal{E}_{\mathrm{tgt}}(f)
\le
\widehat{\mathcal{E}}_n(f)
+ O\!\left( \sqrt{ \tfrac{ \log R }{ n } } \right)
+ O( \rho^{R} )
+ L_\ell \frac{U}{1-\rho}
\mathrm{W}_1(\mathbb{P}_{\mathrm{src}}, \mathbb{P}_{\mathrm{tgt}}).
\]

\paragraph{Choice of $R$.}
To make the truncation error $O(\rho^{R})$ smaller than the statistical
term, choose
\[
R \gtrsim \frac{2 \log n}{\log(1/\rho)}
= \Theta(\log n).
\]
With this choice,
$\rho^{R}$ is negligible and the bound simplifies to
\[
\mathcal{E}_{\mathrm{tgt}}(f)
\le
\widehat{\mathcal{E}}_n(f)
+ O\!\left( \sqrt{ \tfrac{ \log \log n }{ n } } \right)
+ L_\ell \frac{U}{1-\rho}
\mathrm{W}_1(\mathbb{P}_{\mathrm{src}}, \mathbb{P}_{\mathrm{tgt}}).
\]

\paragraph{Comparison with Unconstrained Models.}
For an unconstrained temporal model with hypothesis class
$\mathcal{F}_{\mathrm{base}}$,
one typically has
\[
\mathfrak{R}_n(\mathcal{F}_{\mathrm{base}})
= O\!\left( \sqrt{ \tfrac{ C }{ n } } \right),
\]
where the capacity constant $C$ depends on depth, width, or spectral
norm and is usually much larger than $\log\log n$.
Thus the physics–informed class enjoys a strictly smaller statistical
term $O(\sqrt{\log\log n / n})$ under the same sample size $n$.

\section{Detailed Description of ZAS}
\label{Description of ZAS}
The Zero-FLOPs Axial Swapper (ZAS) is a lightweight spatial mixing operator designed to enrich long-range dependencies without adding computational burden. 
By selectively permuting a small subset of feature channels through block-wise transposition, ZAS introduces cross-region interactions that enhance the receptive field while keeping the temporal dimension untouched. 
Because the operation is purely an index reordering, it adds no learnable parameters and incurs zero FLOPs.

\begin{algorithm}[ht]
\caption{Zero-FLOPs  Axial Swapper (ZAS)}
\label{alg:zas}

Feature tensor $X \in \mathbb{R}^{B \times C \times T \times H \times W}$ \\
Output tensor $\tilde{X} \in \mathbb{R}^{B \times C \times T \times H \times W}$

\textbf{Step 1. Channel partition.} \\
Split $X$ into two disjoint parts:
\[
X = \big[X_{\mathrm{id}},\, X_{\mathrm{swap}}\big],
\]
where $X_{\mathrm{id}}$ contains the first $C-k$ channels
and $X_{\mathrm{swap}}$ contains the last $k=\lfloor pC \rfloor$ channels to be permuted.

\textbf{Step 2. Block partition.} \\
Given a block size $b$, crop the core region
$H_2=\lfloor H/b \rfloor \cdot b$, $W_2=\lfloor W/b \rfloor \cdot b$,
and reshape each spatial slice of $X_{\mathrm{swap}}$
\[
\mathcal{P}: \mathbb{R}^{H_2\times W_2}\rightarrow
\mathbb{R}^{\frac{H_2}{b}\times\frac{W_2}{b}\times b\times b}
\]
into a grid of non-overlapping $b \times b$ blocks.

\textbf{Step 3. Block-wise transpose.} \\
For each $b \times b$ block $Z$, apply the inner transpose
\[
\mathcal{T}(Z)_{u,v} = Z_{v,u}.
\]
This operation is performed independently for every block
and for all batches, channels, and time frames.

\textbf{Step 4. Reconstruction.} \\
Recover the spatial layout by the inverse partition
\[
\mathrm{ZAS}(X_{\mathrm{swap}})
=\mathcal{P}^{-1}\big(\mathcal{T}(\mathcal{P}(X_{\mathrm{swap}}))\big).
\]
Concatenate with the unchanged channels to obtain the output:
\[
\tilde{X} = \big[X_{\mathrm{id}},\ \mathrm{ZAS}(X_{\mathrm{swap}})\big].
\]

\textbf{Remark.} \\
ZAS performs only index reordering and introduces
\emph{zero learnable parameters} and \emph{zero FLOPs};
its Jacobian is a permutation matrix, ensuring gradient safety
and perfect energy preservation.
\end{algorithm}

\section{Visualization of the Predicted and Ground-truth BVP}
\label{sec:visual}
We randomly select representative clip samples from the UBFC-rPPG~\cite{Bobbia2017UBFCrPPG} and PURE~\cite{Stricker2014PURE} datasets and visualize both the predicted rPPG waveforms and their corresponding power spectral density (PSD) curves in Fig.~\ref{fig:one_png} and Fig.~\ref{fig:two_png}. 
These qualitative results provide an intuitive view of model behavior: the predicted signals not only closely follow the ground-truth BVP in amplitude and phase but also exhibit highly consistent dominant frequency peaks in the PSD domain, indicating accurate heart-rate estimation. 
Across both controlled (PURE) and more unconstrained (UBFC) scenarios, PHASE-Net preserves the fine-grained temporal structure of the pulse waveform and maintains sharp, well-aligned spectral peaks, further validating its ability to recover clean physiological rhythms despite variations in illumination, motion, and sensor noise.

\begin{figure}[h]
    \centering
    \begin{subfigure}[b]{0.6\linewidth}
        \centering
        \includegraphics[width=\linewidth]{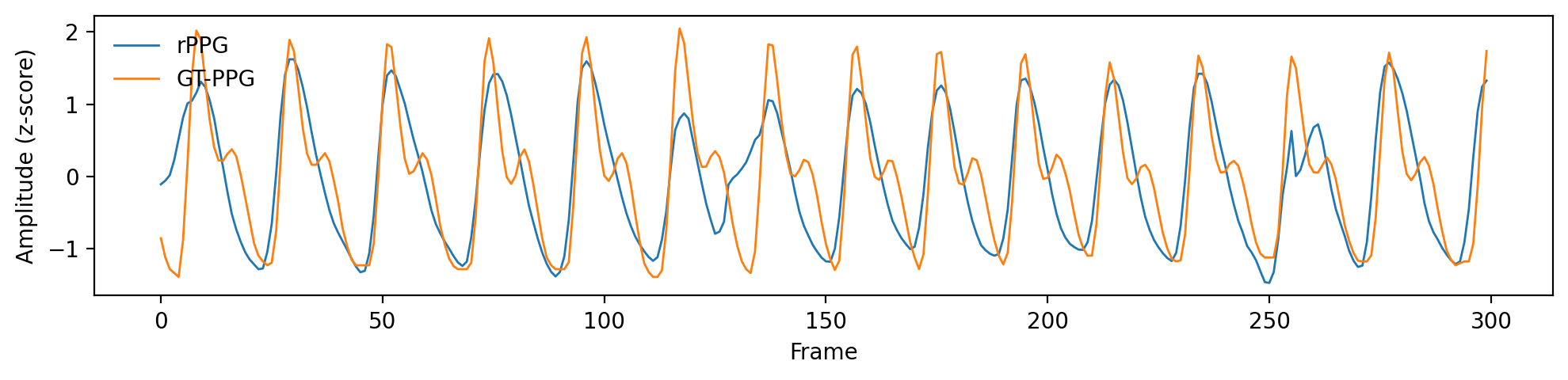}
        \caption{rPPG signal (segment 1).}
    \end{subfigure}
    \hfill
    \begin{subfigure}[b]{0.3\linewidth}
        \centering
        \includegraphics[width=\linewidth]{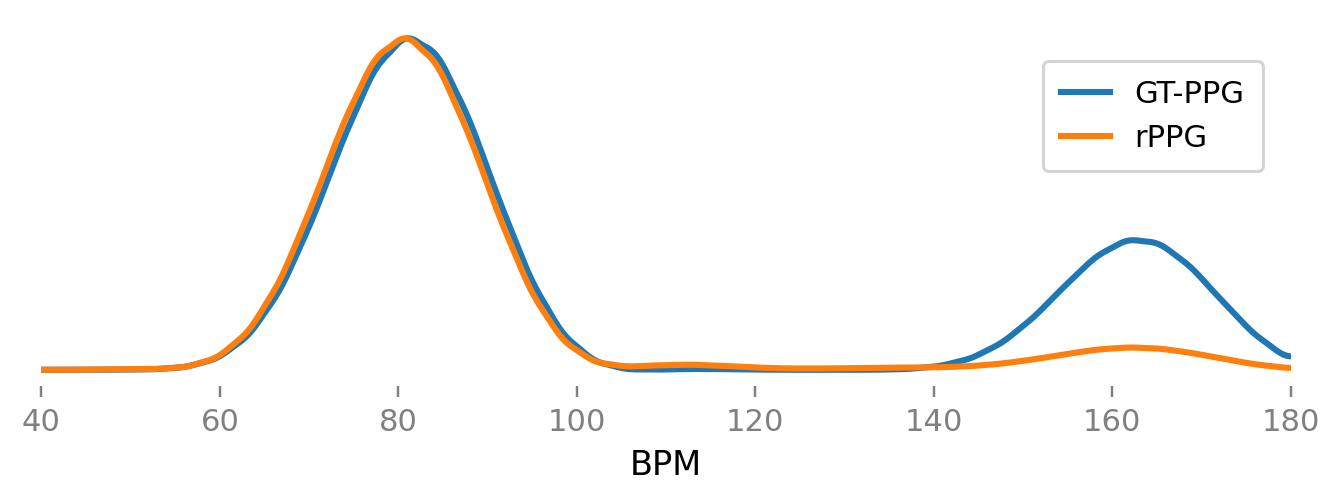}
        \caption{PSD (segment 1).}
    \end{subfigure}

    \vspace{0.4em}

    \begin{subfigure}[b]{0.6\linewidth}
        \centering
        \includegraphics[width=\linewidth]{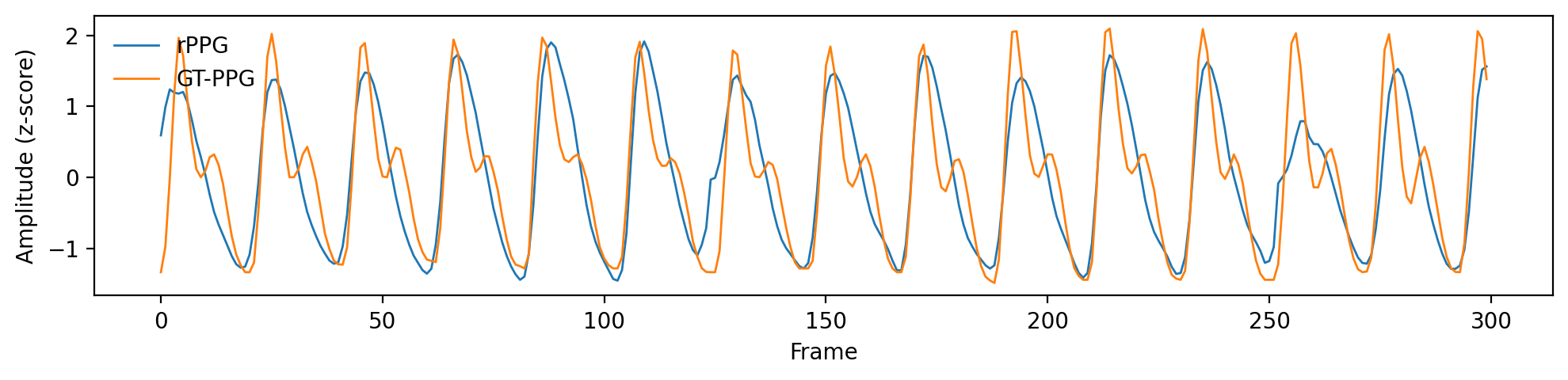}
        \caption{rPPG signal (segment 2).}
    \end{subfigure}
    \hfill
    \begin{subfigure}[b]{0.3\linewidth}
        \centering
        \includegraphics[width=\linewidth]{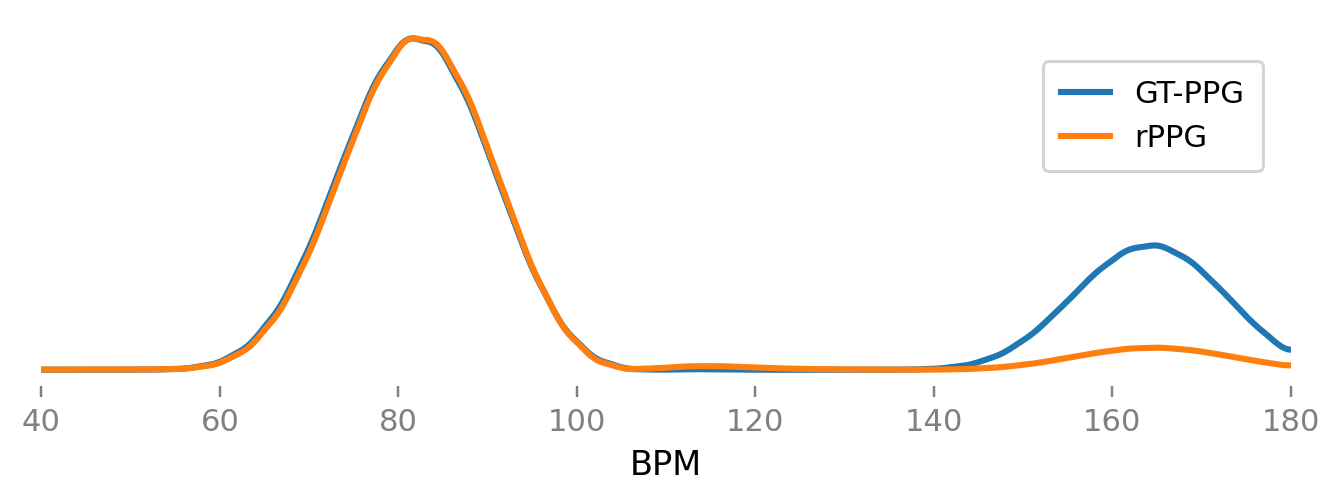}
        \caption{PSD (segment 2).}
    \end{subfigure}

    \caption{Visual comparison of rPPG signals predicted by PHASE-Net and their corresponding power spectral densities (PSDs), along with ground-truth references, on the PURE dataset~\cite{Stricker2014PURE}.}
    \label{fig:one_png}
    \vspace{-1mm}
\end{figure}

\begin{figure}[h]
    \centering
    \begin{subfigure}[b]{0.6\linewidth}
        \centering
        \includegraphics[width=\linewidth]{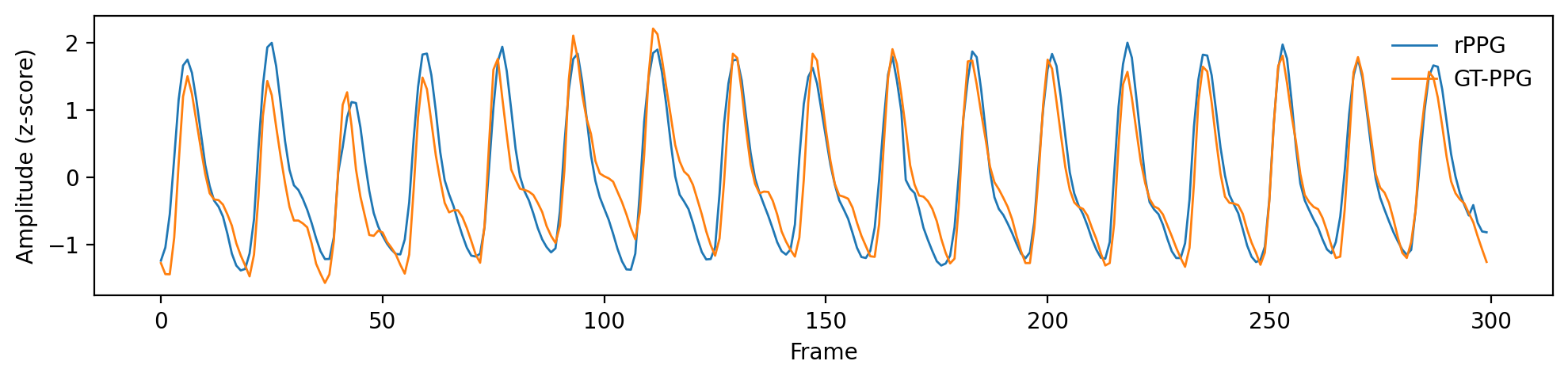}
        \caption{rPPG signal (subject 22).}
    \end{subfigure}
    \hfill
    \begin{subfigure}[b]{0.3\linewidth}
        \centering
        \includegraphics[width=\linewidth]{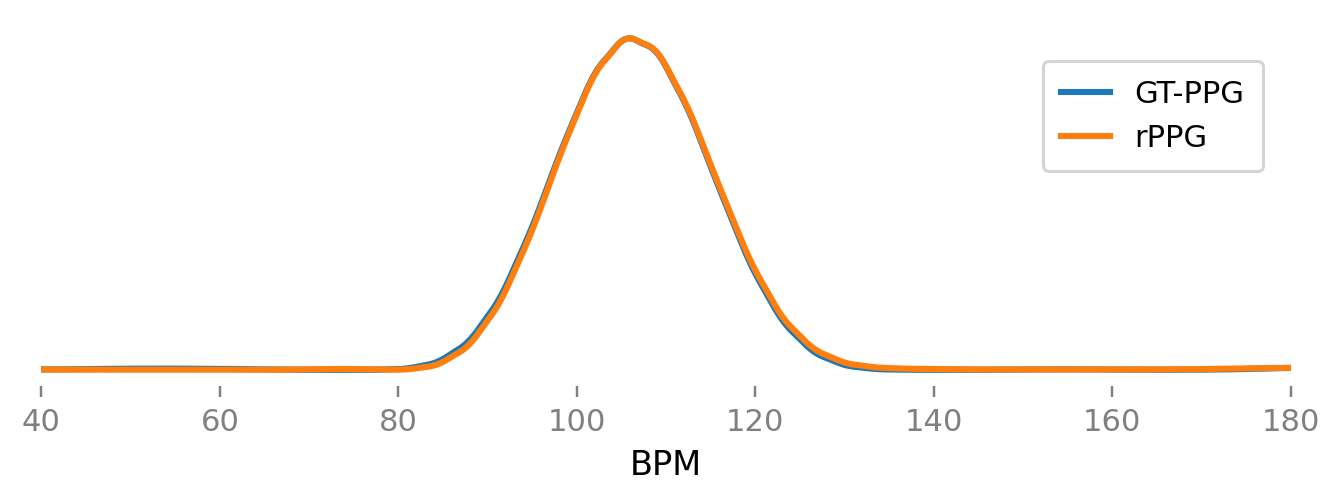}
        \caption{PSD (subject 22).}
    \end{subfigure}

    \vspace{0.4em}

    \begin{subfigure}[b]{0.6\linewidth}
        \centering
        \includegraphics[width=\linewidth]{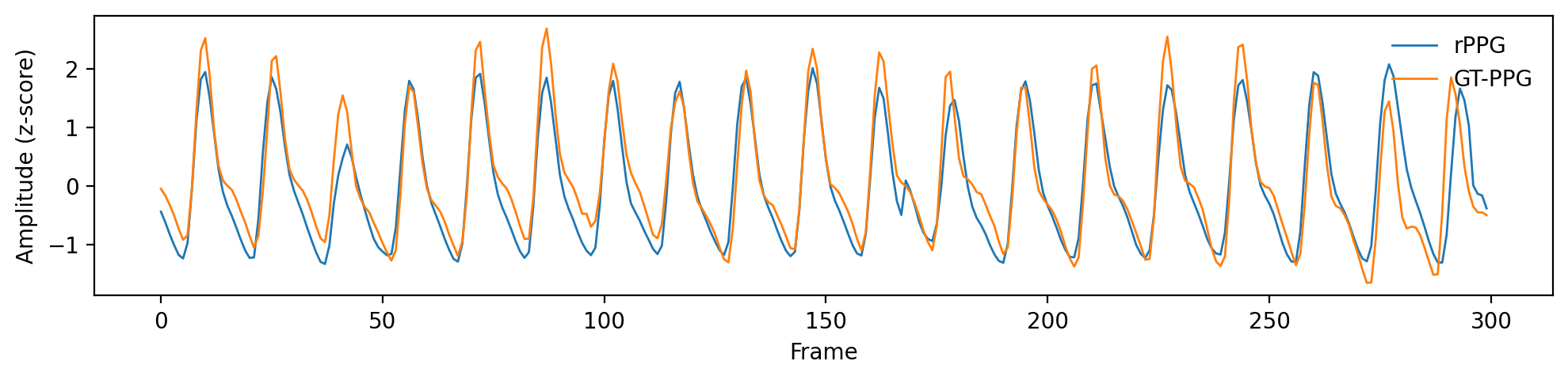}
        \caption{rPPG signal (subject 9).}
    \end{subfigure}
    \hfill
    \begin{subfigure}[b]{0.3\linewidth}
        \centering
        \includegraphics[width=\linewidth]{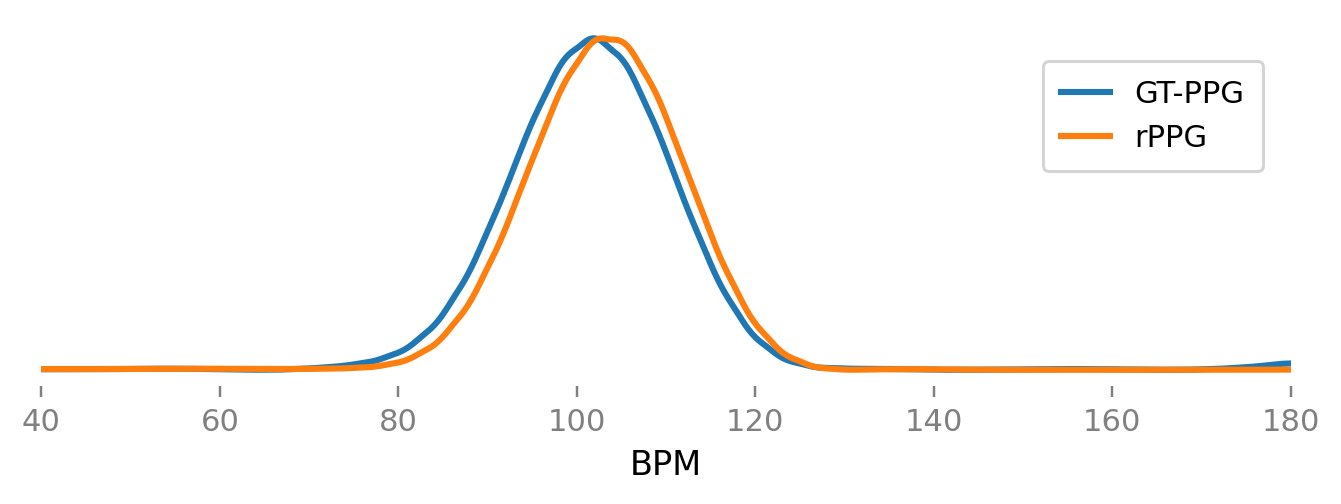}
        \caption{PSD (subject 9).}
    \end{subfigure}

    \caption{Visual comparison of rPPG signals predicted by PHASE-Net and their corresponding PSDs, with ground-truth references, on the UBFC-rPPG dataset~\cite{Bobbia2017UBFCrPPG}.}
    \label{fig:two_png}
    \vspace{-1mm}
\end{figure}

\end{document}